\documentclass{article}


\usepackage[preprint]{neurips_2025}
\usepackage{amsmath, amsthm}
\usepackage{mathtools}
\usepackage{graphicx}
\usepackage{wrapfig}
\usepackage{verbatim}




\usepackage[utf8]{inputenc} 
\usepackage[T1]{fontenc}    
\usepackage{hyperref}       
\usepackage{url}            
\usepackage{booktabs}       
\usepackage{amsfonts}       
\usepackage{nicefrac}       
\usepackage{microtype}      
\usepackage{xcolor}         
\usepackage{multirow}

\newtheorem{theorem}{Theorem}[section]
\newtheorem{lemma}{Lemma}

\newtheorem{prop}{Proposition}

\DeclareMathOperator{\sgn}{sgn}

\title{CoFrGeNet: Continued Fraction Architectures for Language Generation}

%

\author{%
  Amit Dhurandhar\\
  IBM Research \\
  \texttt{adhuran@us.ibm.com} \\
  \And
  Vijil Chenthamarakshan \\
  IBM Research \\
  \texttt{ecvijil@us.ibm.com} \\
  \AND
  Dennis Wei \\
  IBM Research \\
  \texttt{dwei@us.ibm.com} \\
  \And
  Tejaswini Pedapati \\
  IBM Research \\
  \texttt{tejaswinip@us.ibm.com} \\
  \And
  Karthikeyan Natesan Ramamurthy \\
  IBM Research \\
  \texttt{knatesa@us.ibm.com} \\
  \And
  Rahul Nair \\
  IBM Research \\
  \texttt{rahul.nair@ie.ibm.com} \\
}

\begin{document}

\maketitle

\begin{abstract}
Transformers are arguably the preferred architecture for language generation. In this paper, inspired by continued fractions, we introduce a new function class for generative modeling. The architecture family implementing this function class is named CoFrGeNets - Continued Fraction Generative Networks. We design novel architectural components based on this function class that can replace Multi-head Attention and Feed-Forward Networks in Transformer blocks while requiring much fewer parameters. We derive custom gradient formulations to optimize the proposed components more accurately and efficiently than using standard PyTorch-based gradients. Our components are a plug-in replacement requiring little change in training or inference procedures that have already been put in place for Transformer-based models thus making our approach easy to incorporate in large industrial workflows. We experiment on two very different transformer architectures GPT2-xl (1.5B) and Llama3 (3.2B), where the former we pre-train on OpenWebText and GneissWeb, while the latter we pre-train on the docling data mix which consists of nine different datasets. Results show that the performance on downstream classification, Q\& A, reasoning and text understanding tasks of our models is competitive and sometimes even superior to the original models with $\frac{2}{3}$ to $\frac{1}{2}$ the parameters and shorter pre-training time. We believe that future implementations customized to hardware will further bring out the true potential of our architectures.
\end{abstract}
\begin{figure}[htbp]
\vspace{-.3cm}
\centering
\includegraphics[width=.85\textwidth]{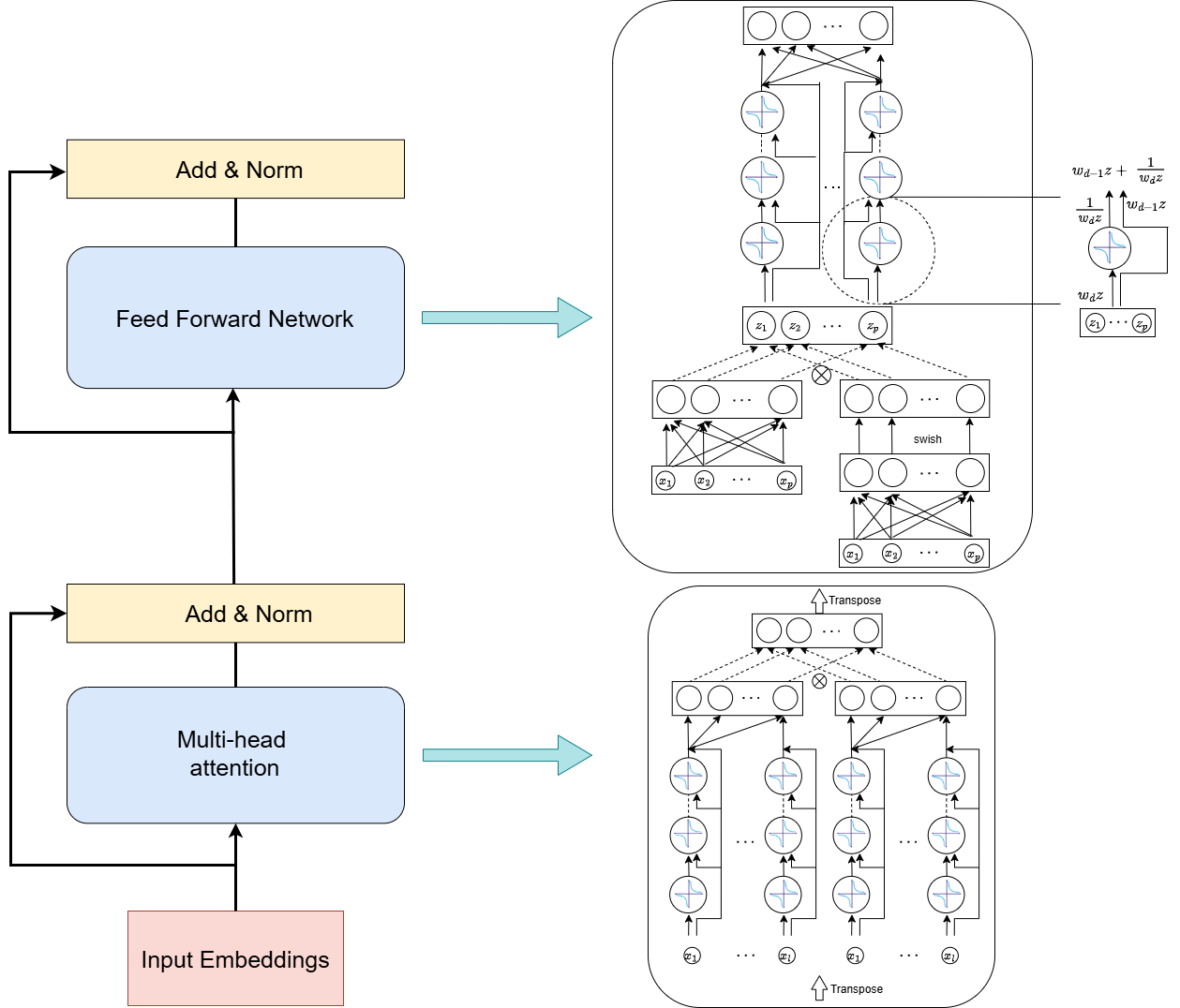}
\caption{Above we see a Transformer block consisting of attention and FFN layers. We propose candidate CoFrNet architectures for Transformer (causal) attention and FFN layers. The circles with the blue curves denote the $\frac{1}{x}$ non-linearity in our architectures. The zoomed out image on the far right shows the mapping between the pictorial representation and the actual equations. 
Details of the architectures are discussed in section \ref{sec:meth}.
}
\label{fig:transformermap}
\vspace{-.25cm}
\end{figure}
\section{Introduction}
Since OpenAI's ChatGPT release at the end of 2022, Large Language Models (LLMs) \citep{radford2019language} have been getting increasingly infused into multiple user applications and platforms across the world. The most prevalent architecture behind these models is the Transformer architecture \citep{attention2017}, which consists of an (multi-head) Attention block and a Feed Forward Network (FFN) with single large hidden layer. In this paper, we propose novel architectural components based on a radically different function class inspired by continued fractions. Taking inspiration from \citep{cofrnets}, where continued fraction architectures \emph{CoFrNets} were introduced for the supervised setting, we build new architectures for the generative setting providing alternatives for attention and FFN in Transformer blocks. 

Given a canonical form for continued fractions $a_0+\frac{1}{a_1+\frac{1}{a_2+\cdots}}$ (ladder like structure) where, $a_k$s are complex numbers, CoFrNets \citep{cofrnets} were introduced for supervised learning problems where in place of the $a_k$s, linear functions of the input $x\in \mathbb{R}^p$ are computed by taking the inner product of $x$ with weight vector $w_k\in \mathbb{R}^p$ in each layer $k$ (or also referred to as step of the ladder).\footnote{A constant term is assumed to be absorbed in $x$.} The reciprocal of the function thus far is applied as a nonlinearity in each layer leading to the following kind of form for a single CoFrNet ladder: 
\begin{align}
w_0x+\frac{1}{w_1x+\frac{1}{w_2x+\cdots}} \label{eqn:cofrnet}
\end{align}
Here $w_k$s are the learnable parameters. Essentially, the input $x$ is passed to each layer which gets multiplied by the corresponding parameter vectors and the reciprocal of the values of the previous layer are added to this. This simple architecture was shown to have universal approximation capabilities when we ensemble enough of these ladders. However, the above contributions were for the supervised setting and it is not clear if such architectures can also be built for representation learning and sequence generation, where we: i) Need to produce multi-dimensional outputs, ii) learn richer functions and iii) model sequences causally i.e. learning parameters that depend only on prior tokens. Moreover, the $\frac{1}{x}$ non-linearity is inefficient to compute in forward and backward passes especially when the depth $d$ and number of ladders $L$ is large. This is because one has to compute the inverse $d\times L$ times and it is known that division is many times slower than multiplication in modern hardware.
We address the above challenges in this paper by making the following contributions that distinguish it significantly from \citep{cofrnets}:

    \textbf{1)} We propose \emph{novel continued fraction architectures} for (causal) attention and FFNs as depicted in Figure \ref{fig:transformermap}. We call our architecture with both components replaced as \textbf{Co}ntinued \textbf{Fr}action \textbf{Ge}nerative \textbf{Net}work (CoFrGeNet). We report results replacing either FFN or attention or both offering the possibility to the user of replacing only one or both of the components for their application. Even replacing one component can offer significant parameter and training time savings as seen in our experiments.    
    \textbf{2)} We propose an \emph{alternative representation} for the ladders and derive custom formulas for the gradients that reduces the number of divisions from $d$ to a constant of just $1$ for a $d$-depth ladder. This greatly enhances both training and inference efficiency.    
    \textbf{3)} We propose a \emph{custom training schedule} to update CoFrGeNet parameters. This is described in section \ref{sec:exp}.  
    \textbf{4)} We pre-train our models on OpenWebText (OWT) \citep{owt}, GneissWeb \citep{gneissweb} and the docling data mix \citep{Docling} showing that our models are \emph{competitive or outperform} the corresponding Transformer models. We compare with Transformers since we are replacing its components making it a fair comparison. For an apples-to-apples comparison with other model architectures such as Mamba \citep{gu2024mamba} one would want to replace its hidden state function with novel (to be designed) CoFrNet components, which would be a significant independent contribution in itself that we leave for future work.

\section{Preliminaries}
\label{sec:prelim}
We introduce notation and also discuss some of properties of continued fractions. The generalized form for a continued fraction is $a_0+\frac{b_1}{a_1+\frac{b_2}{a_2+\cdots}}$, where $a_k$s and $b_k$s can be complex numbers. If none of the $a_k$ or $b_k$ are zero $\forall k\in \mathbb{N}$, then using equivalence transformations \citep{cfbook}, one can create simpler equivalent forms where either the $b_k=1$ or the $a_k=1$ $\forall k\in \mathbb{N}$, with $a_0=0$ in the latter form. A more concise way to write these two forms is as follows: i) $a_0+\frac{1}{a_1+\frac{1}{a_2+\cdots}}\equiv a_0+ \frac{1}{a_1+}\frac{1}{a_2+\cdots}$ and ii) $\frac{b_1}{1+\frac{b_2}{1+\cdots}} \equiv \frac{b_1}{1+} \frac{b_2}{1+\cdots}$. Form i) is known as the \emph{canonical form}.  One of the nice properties of continued fractions is that in representing any real number with natural number parameters $a_k,b_k\in \mathbb{N}$, the rational approximations formed by any of its finite truncations (termed \emph{convergents}) are closer to the true value than any other rational number with the same or smaller denominator. A continued fraction is therefore the best possible rational approximation in this precise sense \citep{cfbook,cfchap}.

In this work, we consider continued fractions in canonical form, with partial numerators $b_k = 1$ for $k = 1, \dots, d$ and depth $d$. We thus view continued fractions as functions $f$ of the partial denominators, where we separate $a_0$ from the others and use $a \coloneqq (a_1, \dots, a_d)$ as a shorthand. Hence we write 
\begin{align}\label{eqn:CFcanon}
f(a_0, a) = a_0 + \frac{1}{a_1 +} \frac{1}{a_2 + } \cdots \frac{1}{a_{d-1} +} \frac{1}{a_d} = a_0 + \tilde{f}(a), 
\end{align}
where we also define $\tilde{f}(a)$ as the ``fractional part'' of $f(a_0, a)$.

Another way of representing a continued fraction is in terms of \emph{continuants}, which we describe next. The continued fraction in \eqref{eqn:CFcanon} can be expressed as the following ratio of polynomials $K_{d+1}$ and $K_{d}$,
\begin{equation}\label{eqn:CFcontRatio}
f(a_0, a) = \frac{K_{d+1}(a_0, \dots, a_d)}{K_d(a_1, \dots, a_d)}.
\end{equation}
Polynomials $K_{d}$, $K_{d+1}$ are part of a sequence of polynomials $K_k$, $k = 0, 1, \dots$, known as \emph{continuants}. They satisfy the recursion
\begin{align}
    &K_0 = 1, \qquad K_1(a_d) = a_d,\label{eqn:initCont}\\
    &K_k(a_{d-k+1}, \dots, a_d) = a_{d-k+1} K_{k-1}(a_{d-k+2}, \dots, a_d) + K_{k-2}(a_{d-k+3}, \dots, a_d).\label{eqn:recurCont}
\end{align}
Using \eqref{eqn:recurCont}, \eqref{eqn:CFcontRatio} can also be written as 
\begin{align}
f(a_0, a) = a_0 + \frac{K_{d-1}(a_2, \dots, a_d)}{K_d(a_1, \dots, a_d)}, \qquad \text{hence } \tilde{f}(a) = \frac{K_{d-1}(a_2, \dots, a_d)}{K_d(a_1, \dots, a_d)}.\label{eqn:CFcontRatio2}
\end{align}

We will exploit the formalism of continuants later for two purposes: first, as a means of computing continued fractions, and second, to derive closed-form expressions for their gradients. This leads to benefits in the forward direction, in terms of speeding up inference, and also in the backward direction, speeding up training, 
compared to standard backpropagation through the multiple layers of a continued fraction. 
While the original CoFrNet work \citep{cofrnets} used this formalism for the limited purpose of local feature-based explanations, here we derive new results making them an integral part in training our architectures.


To construct networks out of continued fractions, we let the partial denominators $a_k$ be affine functions of an input $x$, $a_k = w_k x$, where $w_k$ is a row vector and a $1$ is prepended to the elements of $x$ so that the corresponding coefficient $w_{k0}$ is the intercept or ``bias'' term. We will often refer to a continued fraction with $a_k = w_k x$ as a (CoFrNet) ``ladder'', and we will also construct ensembles of such ladders. 
Throughout the paper we denote the input or embedding dimension by $p$, the number of ladders in an ensemble by $L$, and sequence length by $l$, 
unless specified otherwise.

\section{Related Work}

A brief historical perspective on artificial neural networks is provided in the appendix. Turning our focus to language modeling with neural networks, Recurrent Neural Networks (RNNs), a class of networks with recurrent connections where the output of a neuron at a time step is fed to the input of the neuron at the next time step, were successful in many tasks such as machine translation \citep{sutskever2014sequence} and language modeling \citep{45446}. 
The encoder-decoder Transformer model proposed in \citep{attention2017}, avoids recurrence and relies on attention alone to draw dependencies between the input and output, and these models have  revolutionized language modeling. 
The two early successful transformer architectures that have led to a series of models include the Generative Pre-trained Transformer (GPT) \citep{radford2018improving} and Bidirectional Encoder Representations from
Transformers (BERT) \citep{devlin2019bert}. These pre-trained models can be then \textit{fine-tuned} on relatively small datasets \citep{raffel2020exploring, chung2024scaling, wang2022super} leading to good performance on even unseen tasks. Transformer models, because of their uncompressed view on the entire sequence, show measurable improvement in performance over RNNs, but the attention mechanism scales quadratically with sequence length, as opposed to the linear time generation complexity of RNNs. Given this multiple approximations have been proposed to model attention in Transformers more efficiently. Works such as Synthesizer \citep{syna} and Linformer \citep{linformera} try to make attention linear complexity, while  Mixture-of-depths attention \citep{mixa} and Sliding Window attention \citep{slidea} limit the number of attended tokens in a sequence. Slim attention \citep{slima} does away with the value parameter matrix and models it as a function of the key matrix. Multi-query attention \citep{mqa} and its generalization Grouped Query attention \citep{gqa} limit the number of distinct keys thus reducing parameter count and increasing efficiency. Sparse attention approaches \citep{sparsea} typically attend to local context and sparsely to further away tokens (a.k.a. global context).

Aside from RNNs and Transformers, State-Space Models (SSMs) have also been quite popular. Models such as S4 \citep{gu2022efficiently} and Mamba \citep{gu2024mamba} are recurrent like RNNs, but can handle long range dependencies. The latter selectively propagates information based on the current token making it closer to the modeling power of Transformers, while scaling linearly in sequence length. More recently, Diffusion Models inspired by non-equilibrium statistical physics \citep{pmlr-v37-sohl-dickstein15} have gained traction. The attractive aspect of these models is that generation does not have to be auto-regressive and can happen in parallel. 
In \citep{sahoo2024simple}, the authors propose a simple Masked Diffusion Language Model (MDLM) using an effective training recipe that narrows the gap of diffusion and autoregressive methods in language modeling. Nonetheless, Transformers are still the state-of-the-art in language generation and hence we chose to modify critical components of this architecture.





\begin{figure}[t]
\centering
\includegraphics[width=.42\textwidth]{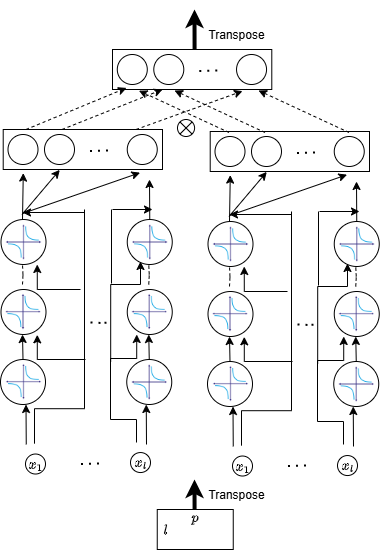}
\hspace{25mm}
\includegraphics[width=.32\textwidth]{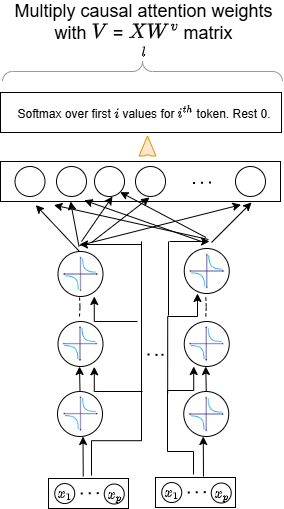}
  \caption{Two CoFrNet architectures to simulate attention a.k.a. causal token-token mixing. For the left architecture (CAttnU) a transpose is taken of the dimension $\times$ sequence length part of the input tensor and the output is transposed back to make it consistent with the later layers. The transpose makes the tokens mix, while upper triangular connections in the second to last layer in the architecture as well as the restricted structure of the ladders make sure information is \emph{only} shared from previous tokens to following tokens and not bi-directionally (a.k.a. causal sharing). It consists of two ensembles of univariate CoFrNet ladders each of which then have an upper triangular linear layer on top. The representations formed are then element wise multiplied to form the final representation. The element wise multiplication produces interaction terms that otherwise would not occur, significantly enhancing representation power without compromising the causal information flow. The right architecture (CAttnM) we do not transpose the input. We use $L$ CoFrNet ladders that get mapped to a sequence length size embedding which corresponds to attention weights for that token. To maintain causality attention weights are computed only over the prior tokens. These then like in standard attention are used to weight the embeddings in the (value) $V$ matrix.
  }
\label{fig:attn}
\vspace{-.25cm}
\end{figure}

\begin{table}[htbp]
\small
\centering
\caption{Scale of parameters for different architectural components. Here $\alpha>>1$ is expansion factor for FFNs in Transformer blocks. The savings in parameters when replacing FFNs can be significantly high as low $d$ and $L$ values are typically sufficient for competitive performance. For attention replacement the savings can be high if $l$ is similar order of magnitude to $p$, which is seen in many architectures (viz. GPT, Llama, etc.).}
\vspace{3mm}
\begin{tabular}{|c|c|c|c|c|}
  \hline
    \textbf{Attention} & \textbf{CAttnU}& \textbf{CAttnM} & \textbf{FFN} & \textbf{Cffn} \\
 \hline\hline
 $4p^2$& $l(2d+l+1)$ & $L(p+l)+p^2$& $2\alpha p^2$& $Lp(d+1)+2p^2$\\
 \hline
  \end{tabular}
 \label{tab:parmcnts}
\end{table} 


\section{Methodology}
\label{sec:meth}

\subsection{Architectures}
We now describe our novel continued fraction architectures that can potentially be used instead of attention and FFN layers in Transformer blocks.

\subsubsection{Replacement for Attention}
In Figure \ref{fig:attn}, we see two potential architectures that perform causal token-token mixing. In the \emph{left architecture}, we take a transpose of the input tensor relative to the embedding dimension and sequence length, which has been done in MLP-Mixer type models \citep{mlpmixer} employed for supervised problems. However, mixing a dimension across tokens arbitrarily will lead to \emph{non-causal} training as the model will get trained assuming access to tokens that follow a given token. 
To handle this we have univariate ladders -- note an input now is a particular dimension across all $l$ tokens -- where, $x_1$ will get different dimensions of the first token in the sequence, $x_2$ will get different dimensions of the second token in the sequence and so on. Hence, $x_1$ can affect all tokens, but $x_2$ can affect all but $x_1$. This is why we have upper triangular linear layer in each ensemble of the architecture. Note that having $p$-variate ladders would break the causal transfer even with upper triangular linear layers as output from each of the ladders would be a function of all tokens. Hence, we have this restricted structure to maintain the causal information constraints else generations are incoherent. We then do element wise multiplication to obtain cross-terms in the variables as the ladders are univariate leading to richer representations. In particular, if depth of the ensembles $d=2$, where  $w^{(1)}_{0}$, $w^{(2)}_{0}$ are parameter vectors at depth 1 and $w^{(1)}_{1}$, $w^{(2)}_{1}$ are parameter vectors at depth 2 for the left and right ensembles respectively, then if $\odot$ implies element-wise multiplication and $\circ -1$ implies element-wise reciprocal we would get:

$y_1 = w^{(1)}_{0}\odot x+{(w^{(1)}_{1}\odot x)}^{\circ -1}$ and $\quad y_2 = w^{(2)}_{0}\odot x+{(w^{(2)}_{1}\odot x)}^{\circ -1}$. 

Let $U_1$ and $U_2$ denote upper triangular parameter matrices then, $O =  U_1y_1 \odot U_2y_2$. $O$  is the $l$ dimensional output produced per input $x$. In our case we will get $p$ such outputs. The tensor containing these $p$ outputs is then transposed back to get a $l\times p$ tensor, which later layers expect.

Now considering the \emph{right architecture} with two ladders (i.e. $L=2$) of depth $2$, a $L\times l$ (full) parameter matrix $F$ and Csoftmax to denote softmax applied causally (i.e. $i^{\text{th}}$ token is a convex combination of the first $i-1$ tokens) with notation from above we have attention weights given by, 

$A = \text{Csoftmax}([y_1, y_2]F)$, where in this case $y_1 = {w^{(1)}_{0}}^{T} x+\left({w^{(1)}_{1}}^{T} x\right)^{-1}$ and $\quad y_2 = {w^{(2)}_{0}}^{T} x+{\left({w^{(2)}_{1}}^{T} x\right)}^{ -1}$ as no transpose of the input tensor is taken and hence $x$, $w$ are $p$ dimensional. If $V=XW^v$ denotes a value matrix like in standard attention where $W^v$ is a $p\times p$ parameter matrix, then the output $O$ is given by: $O = AV$, which would be $l\times p$ tensor.

\begin{wrapfigure}{r}{0.5\textwidth}
\vspace{-1cm}
\centering
\includegraphics[width=.42\textwidth]{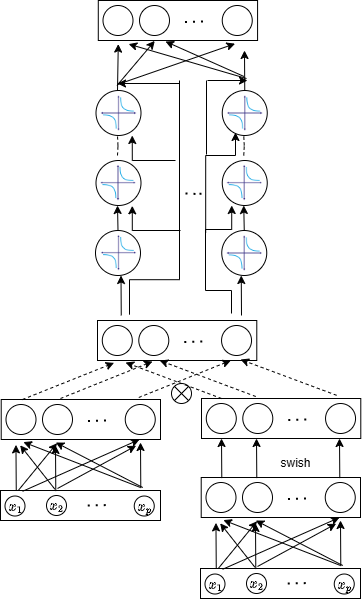}
\caption{CoFrNet architecture simulating FFNs -- Cffn -- in a transformer block. We create a gated \emph{non-expanded} (i.e. $\alpha=1$) representation that we pass to the CoFrNet ladders. No transpose is taken and hence feature mixing in either direction does not interfere with causal generation which is why we have a linear layer on top. Again the collapsed implementation is described in section \ref{sec:contimpl}.}
\label{fig:mlp}
\vspace{-.8cm}
\end{wrapfigure}
\subsubsection{Replacement for FFNs}
For FFNs we simply require feature mixing so no transpose is taken and all features can mix. Hence, we create ensembles of $p$-variate ladders with a linear layer at the end as seen in Figure \ref{fig:mlp}. 

Note that here one could have an arbitrary number of ladders in the ensemble and one projects to $p$ dimensions using the linear layer. The input to the ladders is a gated non-expanded (i.e. $\alpha=1$) representation. Not performing expansion produces significant parameter savings as seen in the experiments. Expressions depicting the scale of parameters of different architectural components are shown in Table \ref{tab:parmcnts}.


\subsection{Architecture for Continued Fraction Ensembles and Continuant-Based Implementation}
\label{sec:contimpl}

The common element in the architectures in Figures~\ref{fig:attn} and \ref{fig:mlp} is a linear combination of an ensemble of CoFrNet ladders. This subsection describes how we implement these linear combinations of ladders using the continuants introduced in Section~\ref{sec:prelim}.

\noindent\textbf{Architecture} Let us denote by $y \in \mathbb{R}^q$ the output of a linear combination of $L$ ladders, where in general $q$ could be different from the input dimension $p$. We use a superscript $j$ to denote the partial denominators $a^{(j)}_0, \dots, a^{(j)}_d$ corresponding to the $j$th ladder, where $a^{(j)}_k = w^{(j)}_k x$. Then based on \eqref{eqn:CFcanon}, the $i$th output component $y_i$ is given by
\begin{equation}\label{eqn:yi}
    y_i = \sum_{j=1}^L v_{ij} \left( a^{(j)}_0 + \tilde{f}\bigl(a^{(j)}\bigr) \right) = \sum_{j=1}^L v_{ij} w^{(j)}_0 x + \sum_{j=1}^L v_{ij} \tilde{f}\bigl(a^{(j)}\bigr),
\end{equation}
where $v_{ij}$ are the coefficients of the linear combination. Since the composition of two linear functions is also linear, we may simplify the first term on the right-hand side of \eqref{eqn:yi} to yield 
\begin{equation*}
    y_i = u_i x + \sum_{j=1}^L v_{ij} \tilde{f}\bigl(a^{(j)}\bigr),
\end{equation*}
where $u_i = \sum_{j=1}^L v_{ij} w^{(j)}_0$ is the parameter vector of the overall linear function. Let $U$ be the matrix with rows $u_i$, $i=1,\dots,q$, $V$ the matrix with entries $v_{ij}$, and $W^{(j)}$ the matrix with rows $w^{(j)}_k$, $j=1,\dots,d$. We may then express the overall computation from $x$ to $y$ as
\begin{equation}\label{eqn:linComboLadders}
    y = U x + V z, \qquad z_j = \tilde{f}(a^{(j)}), \qquad a^{(j)} = W^{(j)} x, \qquad j = 1,\dots,L.
\end{equation}
Based on \eqref{eqn:linComboLadders}, we implement a linear combination of ladders using the architecture shown in Figure~\ref{fig:Ladder_Ensemble_Architecture}. At the far left is a linear layer parameterized by $U$ that directly connects input $x$ to output $y$. To the right are $L$ ladders, where for each ladder $j$, a linear layer parameterized by $W^{(j)}$ first computes the partial denominators $a^{(j)}$ before the continued fraction is computed by the ``CF'' layer. The continued fraction outputs $z_j$ are fed to a linear layer parameterized by $V$, whose output is added to yield $y$.

\noindent\textbf{Continuant implementation} We use the continuants representation from Section~\ref{sec:prelim} to compute continued fractions in the CF layer. Specifically, continuants $K_0, K_1, \dots, K_d$ are first computed using the recursion in \eqref{eqn:initCont}, \eqref{eqn:recurCont}. The continued fraction output $\tilde{f}(a^{(j)})$ is then given by the ratio of $K_{d-1}$ and $K_d$ in \eqref{eqn:CFcontRatio2}. The following result shows that the \emph{gradient} of $\tilde{f}(a^{(j)})$ is also given by ratios of continuants.

\begin{prop}\label{prop:agrad}
The partial derivatives of continued fraction $\tilde{f}(a)$ defined in \eqref{eqn:CFcanon} are given by 
\begin{equation}\label{eqn:dCF/da}
    \frac{\partial\tilde{f}(a)}{\partial a_k} = (-1)^k \left(\frac{K_{d-k}(a_{k+1},\dots,a_d)}{K_d(a_1,\dots,a_d)}\right)^2, \qquad k = 1,\dots,d.    
\end{equation}
\end{prop}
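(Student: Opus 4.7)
I would prove Proposition~\ref{prop:agrad} by induction on the depth $d$, exploiting the self-similar recurrence $\tilde{f}(a_1,\dots,a_d) = 1/(a_1 + \tilde{f}(a_2,\dots,a_d))$ rather than attacking the quotient $K_{d-1}/K_d$ directly via the quotient rule (which would force me to invoke auxiliary identities such as Euler's splitting identity and the determinantal identity $K_d K_{d-2}(a_2,\dots,a_{d-1}) - K_{d-1}(a_1,\dots,a_{d-1})K_{d-1}(a_2,\dots,a_d) = (-1)^d$). The recursive view reduces the argument to one chain-rule step per variable.

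For the base case $d=1$, I have $\tilde{f}(a_1) = 1/a_1 = K_0/K_1$, and $\partial \tilde{f}/\partial a_1 = -1/a_1^2 = -(K_0/K_1)^2$, which matches \eqref{eqn:dCF/da} with $k=1$, $d=1$, since $K_{d-k} = K_0 = 1$ and the sign is $(-1)^1$.

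For the inductive step, assume the claim holds at depth $d-1$ for the continued fraction $\tilde{g}(a_2,\dots,a_d) := \tilde{f}(a_2,\dots,a_d)$ on the variables $a_2,\dots,a_d$. Using \eqref{eqn:CFcanon} I write
\begin{equation*}
\tilde{f}(a_1,\dots,a_d) \;=\; \frac{1}{a_1 + \tilde{g}(a_2,\dots,a_d)},
\end{equation*}
and note by \eqref{eqn:CFcontRatio2} that $a_1 + \tilde{g} = K_d(a_1,\dots,a_d)/K_{d-1}(a_2,\dots,a_d)$, so $\tilde{f} = K_{d-1}/K_d$. For $k=1$, the chain rule gives $\partial \tilde{f}/\partial a_1 = -(a_1+\tilde{g})^{-2} = -\tilde{f}^2 = (-1)^1\bigl(K_{d-1}(a_2,\dots,a_d)/K_d\bigr)^2$, which is exactly \eqref{eqn:dCF/da} at $k=1$ since $K_{d-k} = K_{d-1}(a_2,\dots,a_d)$.

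For $k\ge 2$, the chain rule gives
\begin{equation*}
\frac{\partial \tilde{f}}{\partial a_k} \;=\; -\tilde{f}^2 \cdot \frac{\partial \tilde{g}(a_2,\dots,a_d)}{\partial a_k}.
\end{equation*}
Applying the inductive hypothesis to $\tilde{g}$ (whose variable list has length $d-1$, and where $a_k$ sits in position $k-1$ of that list) yields
\begin{equation*}
\frac{\partial \tilde{g}(a_2,\dots,a_d)}{\partial a_k} \;=\; (-1)^{k-1}\left(\frac{K_{d-k}(a_{k+1},\dots,a_d)}{K_{d-1}(a_2,\dots,a_d)}\right)^2.
\end{equation*}
Multiplying by $-\tilde{f}^2 = -\bigl(K_{d-1}(a_2,\dots,a_d)/K_d\bigr)^2$, the $K_{d-1}(a_2,\dots,a_d)^2$ factors cancel, and the remaining sign becomes $-(-1)^{k-1} = (-1)^k$, giving \eqref{eqn:dCF/da}. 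The one subtlety to watch is the reindexing of the variable list when invoking the inductive hypothesis on $\tilde{g}$: the hypothesis applied at depth $d-1$ with position index $k-1$ produces $K_{(d-1)-(k-1)} = K_{d-k}$ evaluated at the correct suffix $(a_{k+1},\dots,a_d)$, so the indices line up cleanly. I expect this bookkeeping step to be the main place where a careless reader could slip, but it is a routine verification once the recursive decomposition of $\tilde{f}$ is in hand.
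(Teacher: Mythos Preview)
Your proof is correct and takes a genuinely different route from the paper. The paper proves the result via the quotient rule applied to $K_{d+1}/K_d$: it represents each continuant as a tridiagonal determinant, uses the cofactor formula to get $\partial K_k(a_1,\dots,a_k)/\partial a_l = K_{l-1}(a_1,\dots,a_{l-1})\,K_{k-l}(a_{l+1},\dots,a_k)$, and then reduces the resulting numerator via a separately proved continuant identity (their Lemma~3, shown by its own induction). Your argument instead inducts directly on the depth using the self-similarity $\tilde{f}(a_1,\dots,a_d)=1/(a_1+\tilde{f}(a_2,\dots,a_d))$ and a single chain-rule step, so that the $K_{d-1}(a_2,\dots,a_d)^2$ factors cancel telescopically. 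Your approach is more elementary and self-contained: it needs no determinantal machinery and no auxiliary identity, and the only delicate point is the reindexing you already flagged. The paper's approach, by contrast, yields reusable byproducts (the explicit partial derivatives of individual continuants and the Lemma~3 identity) that may serve other purposes, but for this proposition alone your argument is the shorter and cleaner one.
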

\begin{proof}
Using equations \eqref{eqn:CFcanon} and \eqref{eqn:CFcontRatio} we get, 
\[
    \frac{\partial\tilde{f}(a)}{\partial a_k} = \frac{\partial}{\partial a_k} \bigl(f(a_0, a) - a_0\bigr) = \frac{\partial}{\partial a_k} \frac{K_{d+1}(a_{0},\dots,a_d)}{K_d(a_1,\dots,a_d)} - 0 
\]
for $k = 1, \dots, d$. We then invoke Lemma~2 stated in the appendix.
\end{proof}
\begin{wrapfigure}{r}{0.45\textwidth}
\vspace{-.5cm}
\centering
\includegraphics[width=0.45\textwidth]{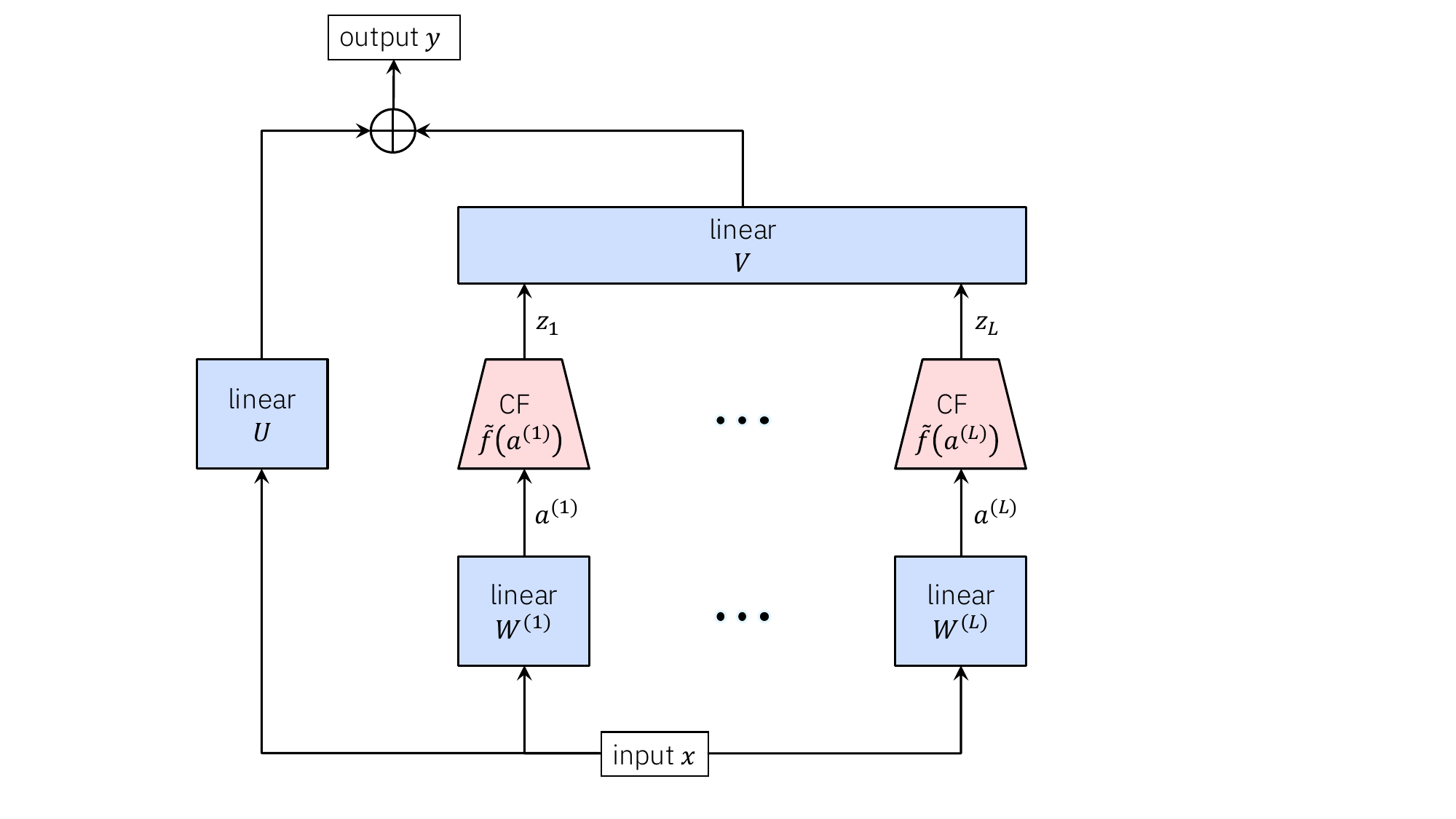}
\caption{Architecture for implementing a linear combination of CoFrNet ladders (CF stands for continued fraction).}
\label{fig:Ladder_Ensemble_Architecture}
\vspace{-.6cm}
\end{wrapfigure}
To take advantage of Proposition~\ref{prop:agrad}, we implement the CF layer in Figure~\ref{fig:Ladder_Ensemble_Architecture} as a custom PyTorch function (\texttt{torch.autograd.Function}). This allows the continuants $K_0, \dots, K_d$, as well as the reciprocal $1/K_d$, to be computed once during the forward pass and saved for the backward pass. Then to compute the gradient, it suffices to multiply $1/K_d$ by other continuants, square the ratios, and change some signs.

\noindent\textbf{Advantages}
Using continuants to compute each continued fraction $\tilde{f}(a^{(j)})$ \eqref{eqn:CFcontRatio2} and its gradient \eqref{eqn:dCF/da} requires only one division, by the same quantity $K_d$. As noted above, the reciprocal $1/K_d$ can be computed once and then reused in all ratios of continuants that are required. As seen from \eqref{eqn:recurCont}, all continuants up to $K_d$ can be computed recursively through $O(d)$ multiplications and additions. This continuants approach yields a major improvement in efficiency over the ``literal'' approach taken in the original CoFrNet work \citep{cofrnets}, which performs one division per layer following the standard representation of a continued fraction \eqref{eqn:cofrnet}. The reduction from $d$ divisions to $1$ is especially significant when ladders are made deep. It applies to both inference and training, since backpropagation through a standard PyTorch implementation of \eqref{eqn:cofrnet} also requires $d$ divisions. 
It is widely known that \emph{divisions are significantly more expensive in current hardware} --- typically an order of magnitude slower --- than multiplications or additions.
%
Moreover, having to divide just once can result in \emph{better numerical stability}.

\noindent\textbf{Avoiding poles and clipping} Equation~\ref{eqn:CFcontRatio2} shows that a continued fraction is equivalent to a rational function, and hence it can suffer from divergence when the denominator $K_d$ goes to zero (these locations are known as \emph{poles} in the context of rational functions). We mitigate this issue using a similar approach as \citep{cofrnets}, namely changing the denominator from $K_d$ to $\sgn(K_d) \max(\lvert K_d \rvert, \epsilon)$ to ensure that it has absolute value at least $\epsilon > 0$. Importantly however, this modification is done only once to $K_d$ as opposed to before every one of the $d$ divisions in \citep{cofrnets}. This may result in less loss of representation power compared to \citep{cofrnets}.

We also maintain the minimum and maximum values that each ladder produces during training. During testing we project or clip predictions to lie in this range so that outputs far away from those seen during training are not produced thus guarding against outlier test predictions.

\section{Experiments}
\label{sec:exp}
\subsection{Setup}
\vspace{-.2cm}
We now perform experiments, where we compare with GPT2-xl (1.5B) first pre-trained on OpenWebText (OWT) \citep{owt} and then on the GneissWeb 35B (GW) \citep{gneissweb} datasets. We compare with three variants of ours i) CoFrGeNet-F, where the FFN is replaced by CoFrNet, ii) CoFrGeNet-A, where the attention is replaced by CoFrNet and iii) CoFrGeNet, where both FFN and attention are replaced. We report results with the CAttnM architecture when attention is replaced as it led to slightly better results than CAttnU in many cases. We also compare with Dense Synthesizer (Synthesizer-D) \citep{syna} which is closest to our CAttnM architecture and an established sparse attention approach (Sparse Attn) \citep{sparsea}. To test the efficacy of CoFrNet on a different architecture we experiment with Llama-3.2B pre-trained on the docling data mix \citep{Docling} of 2T tokens. The data mix contains web (DCLM2, DCLM3Plus \citep{li2024datacomplm}), multilingual (FineWeb-2-edu \citep{fineweb-edu}), code (Starcoder, stack-edu \citep{stack-edu-finemath}), math (Finemath \citep{stack-edu-finemath}, Infiwebmath \citep{infimath}, opc-fineweb-math-corpus \citep{opc}) and synthetic data (Cosmopedia \citep{cosmopedia}), which is heavily used to train models for diverse document understanding. The Llama models already use an efficient form of attention namely Grouped Query Attention (GQA) and hence are a natural efficient attention baseline.

\noindent\textbf{Evaluations:} We report perplexity on Penn Tree Bank (PTB) \citep{Marcus93_PTB}, Wikitext2 \citep{Merity2017_Wikitext}, Wikitext103 \citep{Merity2017_Wikitext}, Lambada \citep{Paperno2016_LAMBADA}, AgNews \citep{AgNews2015} and One Billion Words (LM1B) \citep{Chelba2014_lm1b} datasets. We use a stride of 512 for wikitext2, wikitext103 as recommended in these works. For all the other datasets, we use a stride of 256.
We then fine tune our models on GLUE \citep{glue} (classification) tasks and compare accuracies as done in previous works \citep{diffusion}. We average results over five runs.
\begin{table}[htbp]
    \centering
\scriptsize
\centering
\caption{Downstream task accuracies (best results bolded) on GLUE benchmark after finetuning. The first column is the pre-training dataset. Standard deviations are reported in Table \ref{tab:glue_sd} in the appendix.}
\vspace{1mm}
\begin{tabular}
{|c|c|c|c|c|c|c|c|c|c|}
  \hline
    \textbf{Data} &\textbf{Model} & \textbf{MNLI} & \textbf{QQP}& \textbf{QNLI} & \textbf{SST2} & \textbf{COLA} & \textbf{MRPC} & \textbf{RTE} & \textbf{WNLI}\\
 \hline\hline
 \multirow{6}{*}{OWT} & GPT2-xl \tiny{(1.5B)}& $86.89$&$88.93$ &$91.35$&$93.56$&$81.78$&$79.83$&$60.27$&$58.28$\\
 &CoFrGeNet-F \tiny{(985M)}&$\bf 87.26$&$\bf 89.95$ &$\bf 91.89$&$\bf 94.16$&$\bf 82.59$&$\bf 80.21$&$\bf 61.35$&$\bf 58.30$\\
 &CoFrGeNet-A \tiny{(1.21B)}&$86.94$&$89.31$ &$91.74$&$93.83$&$81.77$&$79.89$&$60.91$&$58.28$\\
 &CoFrGeNet \tiny{(798M)}&$87.11$&$89.36$ &$91.79$&$93.91$&$81.97$&$79.93$&$61.25$&$58.29$\\
&Synthesizer-D \tiny{(1.2B)}&$84.93$&$86.82$ &$90.13$&$91.34$&$80.15$&$77.95$&$59.83$&$58.28$\\
&Sparse Attn \tiny{(1.21B)}&$85.27$&$86.38$ &$90.93$&$92.72$&$80.76$&$77.42$&$59.36$&$58.27$\\
 \hline
 \multirow{6}{*}{GW} & GPT2-xl \tiny{(1.5B)}& $78.28$&$86.83$&$\bf 82.93$&$91.82$&$74.18$&$77.72$&$60.19$&$\bf 58.33$\\
 &CoFrGeNet-F \tiny{(985M)}&$\bf 79.62$&$\bf 87.26$&$82.73$&$\bf 92.36$&$\bf 74.83$&$\bf 78.01$&$\bf 61.35$&$\bf 58.33$\\
 &CoFrGeNet-A \tiny{(1.21B)}&$78.42$&$86.17$&$82.51$&$91.86$&$74.15$&$77.37$&$60.85$&$\bf 58.33$\\
 &CoFrGeNet \tiny{(798M)}&$79.05$&$86.98$&$82.12$&$92.13$&$74.38$&$77.95$&$61.11$&$\bf 58.33$\\
 &Synthesizer-D \tiny{(1.2B)}&$77.56$&$86.35$ &$80.38$&$91.25$&$73.27$&$76.73$&$59.26$&$58.24$\\
&Sparse Attn \tiny{(1.21B)}&$77.67$&$86.41$ &$80.77$&$91.16$&$72.83$&$76.62$&$59.39$&$58.28$\\
 \hline
  \end{tabular}
  \vspace{-1mm}
 \label{tab:glue}
\end{table}
We also compare parameter counts, train time and (per-sample) inference time. We show how the continuants version leads to better train and inference time when compared with the standard implementation of CoFrNets with the improvement mainly attributable to the reduced number of divisions. We provide randomly chosen generations for our variants and GPT2-xl in the appendix. For Llama-3.2B, we evaluate on openbookqa \citep{OpenBookQA2018}, piqa \citep{piqa}, arc-easy \citep{arc}, winogrande \citep{winogrande}, hellaswag \citep{hellaswag}, lambada open AI \citep{radford2018improving}, boolq \citep{boolq} and sciq \citep{SciQ} which cover open domain Q\&A, reasoning and text understanding tasks. We also report the throughput and training time.

\begin{table}[htbp]
\scriptsize
\centering
\caption{Perplexities of the different models with best results bolded.}
\vspace{1mm}
\begin{tabular}{|c|c|c|c|c|c|c|c|}
  \hline
    \textbf{Data} &\textbf{Model} & \textbf{PTB} & \textbf{Wikitxt2}& \textbf{Lbda} & \textbf{AgNews} & \textbf{Lm1b} & \textbf{Wikitxt103}\\
 \hline\hline
 \multirow{6}{*}{OWT} & GPT2-xl \tiny{(1.5B)}&$30.12$ &$18.30$ &$8.66$&$37.13$& $41.20$ & $17.50$\\
&CoFrGeNet-F \tiny{(985M)}&$\bf 29.89$ &$\bf 17.12$& $\bf 8.12$&$\bf 35.72$& $\bf 40.14$ & $\bf 16.14$\\
 &CoFrGeNet-A \tiny{(1.21B)}&$30.02$ &$18.22$& $8.54$&$37.02$& $41.03$ & $17.26$\\
 &CoFrGeNet \tiny{(798M)}&$30.03$ &$17.96$& $8.55$&$36.47$& $40.86$ & $17.17$\\
 &Synthesizer-D \tiny{(1.2B)}&$31.47$ &$19.35$& $9.92$&$39.84$& $41.94$ & $18.91$\\
 &Sparse Attn \tiny{(1.21B)}&$31.23$ &$18.78$& $9.13$&$38.82$& $42.05$ & $18.82$\\
 \hline
 \multirow{6}{*}{GW} & GPT2-xl \tiny{(1.5B)}&$29.07$ &$19.12$&$31.78$ &$45.62$&$52.36$&$18.93$\\
 &CoFrGeNet-F \tiny{(985M)}&$29.72$ &$\bf 18.13$&$\bf 30.52$ &$\bf 41.63$&$\bf 46.83$&$\bf 18.11$\\
 &CoFrGeNet-A \tiny{(1.21B)}&$\bf 28.89$ &$18.77$&$30.98$ &$43.91$&$48.37$&$18.67$\\
 &CoFrGeNet \tiny{(798M)}&$29.08$ &$18.29$&$30.71$ &$42.55$&$48.01$&$18.42$\\
 &Synthesizer-D \tiny{(1.2B)}&$30.83$ &$19.25$& $31.92$&$46.81$& $52.99$ & $19.03$\\
 &Sparse Attn \tiny{(1.21B)}&$29.36$ &$18.95$& $31.23$&$46.38$& $52.83$ & $19.45$\\
 \hline
  \end{tabular}
 \label{tab:ppl}
 \vspace{-0.1cm}
\end{table} 

\noindent\textbf{Parameter Settings:} For pre-training GPT2-xl we use the recommended settings in \url{https://github.com/karpathy/nanoGPT} where, the learning rate is $6\times 10^{-4}$, weight decay is $0.1$, no dropout and maximum iterations is $600K$. For sparse attention (Sparse Attn) we set $g=1$, $w=3$ and $r$ is set to roughly match the number parameters in our CoFrGeNet-A variant for a fair comparison. The values of $g$ and $w$ were set based on experiments conducted in \citep{sparsea} as those produced the best results. For both Synthesizer-D and Sparse Attn we apply a lower triangular mask to the attention weights matrix so as to make the models amenable for auto-regressive generation.

For fine tuning the GPT2-xl model learning rate is $0.25\times 10^{-4}$, batch size is $64$ and no dropout. This is the same for the baselines. For our models the learning rate was $0.125\times 10^{-4}$ with other parameters being the same. These learning rates produced the best results for the respective models.

The Llama variants we pre-train for about 2M iterations. The initial learning rate is $3\times 10^{-4}$ and follows an annealing schedule with no dropout. Adam optimizer is used for both model variants.

For CoFrNets we set $\epsilon = 0.01$. 
We experiment with $d$ equal to $1, 3, 5, 7$ and widths (i.e. number of ladders in an ensemble) also taking the same values when replacing FFNs. We try the same depths and widths when replacing attention.
\begin{wraptable}{r}{.5\textwidth}
\vspace{-.1cm}
\scriptsize
\centering
\caption{Training time and inference time. CoFrGeNet$_B$ is our basic implementation not using continuants. As can be seen using the continuants formalism speeds up training and inference.}
\vspace{1mm}
\begin{tabular}{|c|c|c|c|}
  \hline
    \textbf{Data} &\textbf{Model} & \textbf{Train Time (hrs)}& \textbf{Inf. Time ($\mu$s)} \\
 \hline\hline
 \multirow{5}{*}{OWT} & GPT2-xl &$190$&$643.93$\tiny{$\pm 1.73$}\\
 &CoFrGeNet-F&$186$&$627.48$\tiny{$\pm1.85$}\\
 &CoFrGeNet-A&$186$&$638.26$\tiny{$\pm1.76$}\\
 &CoFrGeNet&$178$&$628.73$\tiny{$\pm1.66$}\\
&CoFrGeNet$_B$&$203$&$5898.72$\tiny{$\pm3.91$}\\
 \hline
 \multirow{5}{*}{GW} & GPT2-xl &$413$&$638.26$\tiny{$\pm 2.73$}\\
 &CoFrGeNet-F&$397$&$627.34$\tiny{$\pm1.65$}\\
 &CoFrGeNet-A&$396$&$625.86$\tiny{$\pm1.78$}\\
 &CoFrGeNet&$387$&$619.78$\tiny{$\pm1.49$}\\
 &CoFrGeNet$_B$&$424$&$5877.87$\tiny{$\pm4.52$}\\
 \hline
  \end{tabular}
 \label{tab:trtime}
 \vspace{-0.4cm}
\end{wraptable} 

\noindent\textbf{Training  Schedule:} 
We employ a dyadic parameter update schedule for our CoFrGeNet components. More specifically, we update only the linear component starting from iteration one, where parameters at higher depths are frozen. Then after half the iterations are done we start updating also the first layer parameters. Then after $\frac{3}{4}^{\text{th}}$ the number of iterations we start updating the depth two parameters and so on. Essentially, depth $i$ parameters are updated for $\frac{t}{2^{i}}$ number of iterations where $t$ is the total number of iterations. We find that this leads to stable training of our architectures as opposed to training all parameters from the start.

\noindent\textbf{Hardware:} We pre-trained the GPT models using $16$ H100 GPUs and distributed data parallel (ddp) training. 
Fine tuning was done using a single A100 GPU for each model. Also inference times were computed for all models using a single A100 GPU. The Llama models were pre-trained using $128$ H100 GPUs with fully sharded distributed data parallel (fsdp) training.

\begin{table}[htbp]
\scriptsize
\centering
\caption{Perplexities of CoFrGeNet (GPT2-xl) variants with (left number) and without (right number) incremental training. As can be seen our training schedule has significant impact. Best results bolded.}
\vspace{1mm}
\begin{tabular}{|p{4mm}|p{21mm}|p{15mm}|p{15mm}|p{15mm}|p{15mm}|p{15mm}|p{15mm}|}
  \hline
    \textbf{Data} &\textbf{Model} & \textbf{PTB} & \textbf{Wikitxt2}& \textbf{Lbda} & \textbf{AgNews} & \textbf{Lm1b} & \textbf{Wikitxt103}\\
 \hline\hline
 \multirow{3}{*}{OWT} 
&CoFrGeNet-F \tiny{(985M)}&$\bf 29.89$, $33.72$ &$\bf 17.12$, $26.71$& $\bf 8.12$, $12.56$&$\bf 35.72$, $42.18$& $\bf 40.14$, $47.28$ & $\bf 16.14$, $22.65$\\
 &CoFrGeNet-A \tiny{(1.21B)}&$30.02$, $38.24$ &$18.22$, $21.82$& $8.54$, $10.92$&$37.02$, $45.52$& $41.03$, $46.21$ & $17.26$, $24.25$\\
 &CoFrGeNet \tiny{(798M)}&$30.03$, $36.77$ &$17.96$, $23.87$& $8.55$, $15.23$&$36.47$, $42.72$& $40.86$, $49.44$ & $17.17$, $23.33$\\
 \hline
 \multirow{3}{*}{GW} 
 &CoFrGeNet-F \tiny{(985M)}&$29.72$, $35.88$ &$\bf 18.13$, $25.55$&$\bf 30.52$, $37.33$ &$\bf 41.63$, $45.46$&$\bf 46.83$, $49.53$&$\bf 18.11$, $20.44$\\
  &CoFrGeNet-A \tiny{(1.21B)}&$\bf 28.89$, $33.71$ &$18.77$, $23.72$&$30.98$, $36.28$ &$43.91$, $45.29$&$48.37$, $52.51$&$18.67$, $21.67$\\
 &CoFrGeNet \tiny{(798M)}&$29.08$, $34.22$ &$18.29$, $22.98$&$30.71$, $36.23$ &$42.55$, $44.39$&$48.01$, $51.91$&$18.42$, $21.67$\\
 \hline
  \end{tabular}
 \label{tab:ppl_ts}
 \vspace{-0.1cm}
\end{table} 

\subsection{Results}
One of the main ways of evaluating if a generative model has learnt good representations is to test it on downstream tasks. In Table \ref{tab:glue} we evaluate how our models perform w.r.t. GPT2-xl on GLUE tasks. 
\begin{wraptable}{r}{.8\textwidth}
\vspace{-.25cm}
\scriptsize
\centering
\caption{Zero-shot accuracies on open domain Q\&A, reasoning and text understanding tasks. The docling data mix of $2$ trillion tokens was used for pre-training.}
\vspace{1mm}
\begin{tabular}{|c|c|c|c|c|c|c|c|c|}
  \hline
    \textbf{Model} & \textbf{openqa} & \textbf{piqa}& \textbf{arc} & \textbf{wino} & \textbf{hswag} & \textbf{lambada} & \textbf{boolq} & \textbf{sciq}\\
 \hline\hline
  Llama \tiny{(3.2B)}& $.282$&$.76$ &$.77$&$\bf .654$&$\bf .503$&$.581$&$\bf .691$&$.941$\\
 CoFrGeNet-F \tiny{(2.1B)}&$.294$&$\bf .764$ &$\bf .778$&$.649$&$.491$&$\bf .583$&$.668$&$\bf .944$\\
 CoFrGeNet-A \tiny{(2.5B)}&$\bf .304$&$.752$ &$.757$&$.646$&$.463$&$.575$&$.633$&$.914$\\
 CoFrGeNet \tiny{(1.8B)}&$.283$&$.751$ &$.751$&$.64$&$.464$&$.571$&$.633$&$.907$\\
Mamba-2 \tiny{(3.2B)}&$\bf .324$ &	$.761$	& $.768$ &	$.615$	& $.486$	& $.548$	& $.655$ &	$.919$
\\
\hline
   \end{tabular}
  \vspace{-5mm}
 \label{tab:docling}
 \vspace{5mm}
\end{wraptable}
We observe that our models are much smaller -- sizes are mentioned next to the names in column two -- yet are better in performance in most cases to the original GPT2-xl model. In fact, they are also better than the linear attention and sparse attention baselines being similar or smaller size. For the Sparse Attn baseline the size reflects the sparsity level or the number of non-zeros. CoFrGeNet-F seems to have the best performance amongst all the variants in most cases. In Table \ref{tab:ppl}, we evaluate how confident the model is in its generations. 
We see in Table \ref{tab:ppl} that again our models are better than GPT2-xl and the efficient attention baselines. Here again CoFrGeNet-F seems to have the best perplexity in most cases consistent with the fine tuning performance.
\begin{wraptable}{r}{.45\textwidth}
\vspace{-.65cm}
\scriptsize
\centering
\caption{Throughput for Llama-3.2B and our variants.}
\vspace{1mm}
\begin{tabular}{|c|c|c|}
  \hline
    \textbf{Model} & \textbf{Tokens/day} & \textbf{Train Time (days)}\\
 \hline\hline
 Llama \tiny{(3.2B)}& $235$B&$8.5$\\
 CoFrGeNet-F \tiny{(2.1B)}&$303$B&$6.6$\\
  CoFrGeNet-A \tiny{(2.5B)}&$250$B&$8$\\
  CoFrGeNet \tiny{(1.8B)}&$315$B&$6.4$ \\
\hline
   \end{tabular}
  \vspace{-1mm}
 \label{tab:doclingspeed}
\end{wraptable}
In Table \ref{tab:trtime}, we compare training and inference times of our models and GPT2-xl. Here we add an additional model CoFrGeNet$_B$ which is the same architecture as CoFrGeNet, but implemented as multi-layer ladders as done in \citep{cofrnets}, without exploiting the continuants formalism. This means a division operation has to be done at every layer of the ladder while training and inferring. As can be seen the training for the continuants version is faster, with inference being almost an order of magnitude faster. In Table \ref{tab:ppl_ts}, we compare the perplexities of our trained models with and without our custom training schedule. As can be seen our training schedule leads to much better performing models as it stabilizes training.

In Table \ref{tab:docling}, we observe similar qualitative behavior for the Llama models even when tested on diverse tasks ranging from open domain Q\&A to reasoning, where CoFrGeNet-F is the best on majority of these tasks, while the other variants are still competitive with the original Llama model. The throughputs are observed in Table \ref{tab:doclingspeed}. We see that our variants are faster than the original Llama where, CoFrGeNet-F and CoFrGeNet take as much as a couple of days less to train. 

These results suggest that across model architectures and tasks our architectural modifications lead to competitive models that are parameter efficient.

\section{Discussion}
\label{sec:disc}
We have proposed novel continued fraction inspired architectures as replacements for attention and FFNs in transformer blocks. This new interesting function class can learn accurate, compact models that are also efficient to train and infer. Our continuant based gradient derivation and implementation facilitated these benefits over and above optimizing these architectures by backpropagating through the layers using standard Pytorch functionalities as done previously \citep{cofrnets}. The custom training schedule for CoFrGeNet specific parameters further helped stabilize and improve performance. In the future, it would be interesting to experiment with other open architectures such as Mamba as well as Mixture-Of-Experts kind of architectures. Inventing new and better CoFrNet architectures for attention and FFNs beyond those proposed in this work is another interesting direction. Also building custom Triton Kernels \citep{tillet2019triton} for our components to further speedup training and inference might be a worthwhile future effort.


As such we believe we have laid the groundwork for continued fraction inspired generative architectures. This could lead to small, efficient to train and accurate generative models across applications and industries. In a way this could further democratize AI as entities with fewer resources could also pre-train good quality models.

Our use of divisions as the non-linearity distinguishes our architecture from others. However, divisions are more expensive than matrix multiplications and additions in modern digital hardware and this is a drawback. We are working towards hardware as well as software solutions for this. For instance, outsourcing divisions to FPGA hardware is something we are seeing initial promise in. Also writing divisions in some other form may be interesting to explore from a software side. Also, there are no implicit safety guards for these models similar to other architectures and so they are susceptible to hallucinations, adversarial attacks and the likes. We hope future research exploiting the specific functional form can implicitly address some of these challenges, which we believe could be very exciting.

\section*{Acknowledgements}
We would like to thank David Cox for pointing us towards the docling data mix as well as the GW dataset and making sure we have resources to run the experiments. We would also like to thank Hajar Gohari's team and Ahmed Nassar for details on these datasets as well as corresponding models and training frameworks. Also special thanks to Kush Varshney, Sriram Raghavan and Ruchir Puri for supporting this effort.

\bibliography{ref}

@software{cosmopedia,
  author = {Ben Allal, Loubna and Lozhkov, Anton and Penedo, Guilherme and Wolf, Thomas and von Werra, Leandro},
  title = {Cosmopedia},
  month = February,
  year = 2024,
  url = {https://huggingface.co/datasets/HuggingFaceTB/cosmopedia}
}

@inproceedings{opc,
  title = {OpenCoder: The Open Cookbook for Top-Tier Code Large Language Models},
  author = {Siming Huang and Tianhao Cheng and Jason Klein Liu and Jiaran Hao and Liuyihan Song and Yang Xu and J. Yang and J. H. Liu and Chenchen Zhang and Linzheng Chai and Ruifeng Yuan and Zhaoxiang Zhang and Jie Fu and Qian Liu and Ge Zhang and Zili Wang and Yuan Qi and Yinghui Xu and Wei Chu},
  year = {2024},
  url = {https://arxiv.org/pdf/2411.04905}
}

@misc{infimath,
      title={InfiMM-WebMath-40B: Advancing Multimodal Pre-Training for Enhanced Mathematical Reasoning}, 
      author={Xiaotian Han and Yiren Jian and Xuefeng Hu and Haogeng Liu and Yiqi Wang and Qihang Fan and Yuang Ai and Huaibo Huang and Ran He and Zhenheng Yang and Quanzeng You},
      year={2024},
      eprint={2409.12568},
      archivePrefix={arXiv},
      primaryClass={cs.CV},
      url={https://arxiv.org/abs/2409.12568}, 
}

@misc{stack-edu-finemath,
      title={SmolLM2: When Smol Goes Big -- Data-Centric Training of a Small Language Model}, 
      author={Loubna Ben Allal and Anton Lozhkov and Elie Bakouch and Gabriel Martín Blázquez and Guilherme Penedo and Lewis Tunstall and Andrés Marafioti and Hynek Kydlíček and Agustín Piqueres Lajarín and Vaibhav Srivastav and Joshua Lochner and Caleb Fahlgren and Xuan-Son Nguyen and Clémentine Fourrier and Ben Burtenshaw and Hugo Larcher and Haojun Zhao and Cyril Zakka and Mathieu Morlon and Colin Raffel and Leandro von Werra and Thomas Wolf},
      year={2025},
      eprint={2502.02737},
      archivePrefix={arXiv},
      primaryClass={cs.CL},
      url={https://arxiv.org/abs/2502.02737}
}

@software{fineweb-edu,
  author = {Lozhkov, Anton and Ben Allal, Loubna and von Werra, Leandro and Wolf, Thomas},
  title = {FineWeb-Edu},
  month = May,
  year = 2024,
  url = {https://huggingface.co/datasets/HuggingFaceFW/fineweb-edu}
}

@article{li2024datacomplm,
      title={DataComp-LM: In search of the next generation of training sets for language models},
      author={Jeffrey Li and Alex Fang and Georgios Smyrnis and Maor Ivgi and Matt Jordan and Samir Gadre and Hritik Bansal and Etash Guha and Sedrick Keh and Kushal Arora and Saurabh Garg and Rui Xin and Niklas Muennighoff and Reinhard Heckel and Jean Mercat and Mayee Chen and Suchin Gururangan and Mitchell Wortsman and Alon Albalak and Yonatan Bitton and Marianna Nezhurina and Amro Abbas and Cheng-Yu Hsieh and Dhruba Ghosh and Josh Gardner and Maciej Kilian and Hanlin Zhang and Rulin Shao and Sarah Pratt and Sunny Sanyal and Gabriel Ilharco and Giannis Daras and Kalyani Marathe and Aaron Gokaslan and Jieyu Zhang and Khyathi Chandu and Thao Nguyen and Igor Vasiljevic and Sham Kakade and Shuran Song and Sujay Sanghavi and Fartash Faghri and Sewoong Oh and Luke Zettlemoyer and Kyle Lo and Alaaeldin El-Nouby and Hadi Pouransari and Alexander Toshev and Stephanie Wang and Dirk Groeneveld and Luca Soldaini and Pang Wei Koh and Jenia Jitsev and Thomas Kollar and Alexandros G. Dimakis and Yair Carmon and Achal Dave and Ludwig Schmidt and Vaishaal Shankar},
      year={2024},
      journal={arXiv preprint arXiv:2406.11794}
}

@inproceedings{SciQ,
    title={Crowdsourcing Multiple Choice Science Questions},
    author={Johannes Welbl and Nelson F. Liu and Matt Gardner},
    year={2017},
    journal={arXiv:1707.06209v1}
}

@inproceedings{boolq,
  title =     {BoolQ: Exploring the Surprising Difficulty of Natural Yes/No Questions},
  author =    {Clark Christopher and Lee Kenton and Chang Ming-Wei and Kwiatkowski Tom and Collins Michael and Toutanova Kristina},
  booktitle = {NAACL},
  year =      {2019}
}

@inproceedings{hellaswag,
    title={HellaSwag: Can a Machine Really Finish Your Sentence?},
    author={Zellers, Rowan and Holtzman, Ari and Bisk, Yonatan and Farhadi, Ali and Choi, Yejin},
    booktitle ={Proceedings of the 57th Annual Meeting of the Association for Computational Linguistics},
    year={2019}
}

@InProceedings{winogrande,
title = {WinoGrande: An Adversarial Winograd Schema Challenge at Scale},
authors={Keisuke, Sakaguchi and Ronan, Le Bras and Chandra, Bhagavatula and Yejin, Choi
},
year={2019}
}

@article{arc,
      author    = {Peter Clark  and Isaac Cowhey and Oren Etzioni and Tushar Khot and
                    Ashish Sabharwal and Carissa Schoenick and Oyvind Tafjord},
      title     = {Think you have Solved Question Answering? Try ARC, the AI2 Reasoning Challenge},
      journal   = {arXiv:1803.05457v1},
      year      = {2018},
}

@inproceedings{OpenBookQA2018,
 title={Can a Suit of Armor Conduct Electricity? A New Dataset for Open Book Question Answering},
 author={Todor Mihaylov and Peter Clark and Tushar Khot and Ashish Sabharwal},
 booktitle={EMNLP},
 year={2018}
}

@inproceedings{piqa,
  author = {Yonatan Bisk and Rowan Zellers and
            Ronan Le Bras and Jianfeng Gao
            and Yejin Choi},
  title = {PIQA: Reasoning about Physical Commonsense in
           Natural Language},
  booktitle = {Thirty-Fourth AAAI Conference on
               Artificial Intelligence},
  year = {2020},
}

@techreport{Docling,
  author = {Deep Search Team},
  month = {8},
  title = {Docling Technical Report},
  url = {https://arxiv.org/abs/2408.09869},
  eprint = {2408.09869},
  doi = {10.48550/arXiv.2408.09869},
  version = {1.0.0},
  year = {2024}
}

@inproceedings{sparsea,
author = {Zaheer, Manzil and Guruganesh, Guru and Dubey, Avinava and Ainslie, Joshua and Alberti, Chris and Ontanon, Santiago and Pham, Philip and Ravula, Anirudh and Wang, Qifan and Yang, Li and Ahmed, Amr},
title = {Big bird: transformers for longer sequences},
year = {2024},
articleno = {1450},
numpages = {15},
location = {Vancouver, BC, Canada},
series = {NeurIPS '24}
}

@article{gqa,
  title={GQA: Training Generalized Multi-Query Transformer Models from Multi-Head Checkpoints},
  author={Ainslie Joshua and Lee-Thorp, James and de Jong, Michiel and Zemlyanskiy, Yury and Lebr{\'o}n, Federico and Sanghai, Sumit},
  journal={Empirical Method in Natural Language Prcessing},
  year={2023}
}

@misc{mqa,
      title={Fast Transformer Decoding: One Write-Head is All You Need}, 
      author={Noam Shazeer},
      year={2019},
      eprint={1911.02150},
      archivePrefix={arXiv},
      primaryClass={cs.NE},
      url={https://arxiv.org/abs/1911.02150}, 
}

@misc{slima,
      title={Slim attention: cut your context memory in half without loss -- K-cache is all you need for MHA}, 
      author={Nils Graef and Andrew Wasielewski},
      year={2025},
      eprint={2503.05840},
      archivePrefix={arXiv},
      primaryClass={cs.LG},
      url={https://arxiv.org/abs/2503.05840}, 
}

@misc{slidea,
      title={Sliding Window Attention Training for Efficient Large Language Models}, 
      author={Zichuan Fu and Wentao Song and Yejing Wang and Xian Wu and Yefeng Zheng and Yingying Zhang and Derong Xu and Xuetao Wei and Tong Xu and Xiangyu Zhao},
      year={2025},
      eprint={2502.18845},
      archivePrefix={arXiv},
      primaryClass={cs.CL},
      url={https://arxiv.org/abs/2502.18845}, 
}

@misc{mixa,
      title={Attention Is All You Need For Mixture-of-Depths Routing}, 
      author={Advait Gadhikar and Souptik Kumar Majumdar and Niclas Popp and Piyapat Saranrittichai and Martin Rapp and Lukas Schott},
      year={2024},
      eprint={2412.20875},
      archivePrefix={arXiv},
      primaryClass={cs.CV},
      url={https://arxiv.org/abs/2412.20875}, 
}

@inproceedings{syna,
author = {Yi Tay and Dara Bahri and Donald Metzler and Da-Cheng Juan and Zhe Zhao and Che Zheng},
title = {Synthesizer: Rethinking Self-Attention in Transformer Models},
year = {2021},
booktitle = {Intl. Conference on Machine Learning}
}

@article{linformera,
  author       = {Sinong Wang and
                  Belinda Z. Li and
                  Madian Khabsa and
                  Han Fang and
                  Hao Ma},
  title        = {Linformer: Self-Attention with Linear Complexity},
  journal      = {CoRR},
  volume       = {abs/2006.04768},
  year         = {2020},
  url          = {https://arxiv.org/abs/2006.04768},
  eprinttype    = {arXiv}
}

@inproceedings{AgNews2015,
author = {Zhang, Xiang and Zhao, Junbo and LeCun, Yann},
title = {Character-Level Convolutional Networks for Text Classification},
year = {2015},
publisher = {MIT Press},
address = {Cambridge, MA, USA},
booktitle = {Proceedings of the 28th International Conference on Neural Information Processing Systems - Volume 1},
pages = {649–657},
numpages = {9},
location = {Montreal, Canada},
series = {NIPS'15}
}

@inproceedings{mlpmixer,
      title={MLP-Mixer: An all-MLP Architecture for Vision}, 
      author={Ilya Tolstikhin and Neil Houlsby and Alexander Kolesnikov and Lucas Beyer and Xiaohua Zhai and Thomas Unterthiner and Jessica Yung and Andreas Steiner and Daniel Keysers and Jakob Uszkoreit and Mario Lucic and Alexey Dosovitskiy},
      booktitle={Computer Vision and Pattern Recognition},
  year={2021}
}

@inproceedings{tillet2019triton,
  title={Triton: an intermediate language and compiler for tiled neural network computations},
  author={Tillet, Philippe and Kung, Hsiang-Tsung and Cox, David},
  booktitle={Proceedings of the 3rd ACM SIGPLAN International Workshop on Machine Learning and Programming Languages},
  pages={10--19},
  year={2019}
}

@inproceedings{glue,
  title={GLUE: A Multi-Task Benchmark and Analysis Platform for Natural Language Understanding},
  author={Alex Wang and Amanpreet Singh and Julian Michael and Felix Hill and Omer Levy and Samuel R. Bowman},
  booktitle={Proceedings of the 24th International Conference on Learning Representations},
  year={2019}
}

@misc{45446,title	= {Exploring the limits of language modeling},author	= {Rafal Jozefowicz and Oriol Vinyals and Mike Schuster and Noam Shazeer and Yonghui Wu},year	= {2016},URL	= {https://arxiv.org/pdf/1602.02410.pdf}}

@inproceedings{Chelba2014_lm1b,
  author       = {Ciprian Chelba and
                  Tom{\'{a}}s Mikolov and
                  Mike Schuster and
                  Qi Ge and
                  Thorsten Brants and
                  Phillipp Koehn and
                  Tony Robinson},
  editor       = {Haizhou Li and
                  Helen M. Meng and
                  Bin Ma and
                  Engsiong Chng and
                  Lei Xie},
  title        = {One billion word benchmark for measuring progress in statistical language
                  modeling},
  booktitle    = {15th Annual Conference of the International Speech Communication Association,
                  {INTERSPEECH} 2014, Singapore, September 14-18, 2014},
  pages        = {2635--2639},
  publisher    = {{ISCA}},
  year         = {2014},
  url          = {https://doi.org/10.21437/Interspeech.2014-564},
  doi          = {10.21437/INTERSPEECH.2014-564},
  timestamp    = {Tue, 11 Jun 2024 16:45:43 +0200},
  biburl       = {https://dblp.org/rec/conf/interspeech/ChelbaMSGBKR14.bib},
  bibsource    = {dblp computer science bibliography, https://dblp.org}
}

@article{Marcus93_PTB,
  author       = {Mitchell P. Marcus and
                  Beatrice Santorini and
                  Mary Ann Marcinkiewicz},
  title        = {Building a Large Annotated Corpus of English: The Penn Treebank},
  journal      = {Comput. Linguistics},
  volume       = {19},
  number       = {2},
  pages        = {313--330},
  year         = {1993},
  timestamp    = {Mon, 11 May 2020 15:46:15 +0200},
  biburl       = {https://dblp.org/rec/journals/coling/MarcusSM94.bib},
  bibsource    = {dblp computer science bibliography, https://dblp.org}
}

@inproceedings{Paperno2016_LAMBADA,
  author       = {Denis Paperno and
                  Germ{\'{a}}n Kruszewski and
                  Angeliki Lazaridou and
                  Quan Ngoc Pham and
                  Raffaella Bernardi and
                  Sandro Pezzelle and
                  Marco Baroni and
                  Gemma Boleda and
                  Raquel Fern{\'{a}}ndez},
  title        = {The {LAMBADA} dataset: Word prediction requiring a broad discourse
                  context},
  booktitle    = {Proceedings of the 54th Annual Meeting of the Association for Computational
                  Linguistics, {ACL} 2016, August 7-12, 2016, Berlin, Germany, Volume
                  1: Long Papers},
  publisher    = {The Association for Computer Linguistics},
  year         = {2016},
  url          = {https://doi.org/10.18653/v1/p16-1144},
  doi          = {10.18653/V1/P16-1144},
  timestamp    = {Mon, 03 Mar 2025 20:56:47 +0100},
  biburl       = {https://dblp.org/rec/conf/acl/PapernoKLPBPBBF16.bib},
  bibsource    = {dblp computer science bibliography, https://dblp.org}
}

@inproceedings{Merity2017_Wikitext,
  author       = {Stephen Merity and
                  Caiming Xiong and
                  James Bradbury and
                  Richard Socher},
  title        = {Pointer Sentinel Mixture Models},
  booktitle    = {5th International Conference on Learning Representations, {ICLR} 2017,
                  Toulon, France, April 24-26, 2017, Conference Track Proceedings},
  publisher    = {OpenReview.net},
  year         = {2017},
  url          = {https://openreview.net/forum?id=Byj72udxe},
  timestamp    = {Thu, 25 Jul 2019 14:25:57 +0200},
  biburl       = {https://dblp.org/rec/conf/iclr/MerityX0S17.bib},
  bibsource    = {dblp computer science bibliography, https://dblp.org}
}

@inproceedings{wang2022super,
  title={Super-NaturalInstructions: Generalization via Declarative Instructions on 1600+ NLP Tasks},
  author={Wang, Yizhong and Mishra, Swaroop and Alipoormolabashi, Pegah and Kordi, Yeganeh and Mirzaei, Amirreza and Naik, Atharva and Ashok, Arjun and Dhanasekaran, Arut Selvan and Arunkumar, Anjana and Stap, David and others},
  booktitle={Proceedings of the 2022 Conference on Empirical Methods in Natural Language Processing},
  pages={5085--5109},
  year={2022}
}

@inproceedings{
sahoo2024simple,
title={Simple and Effective Masked Diffusion Language Models},
author={Subham Sekhar Sahoo and Marianne Arriola and Aaron Gokaslan and Edgar Mariano Marroquin and Alexander M Rush and Yair Schiff and Justin T Chiu and Volodymyr Kuleshov},
booktitle={The Thirty-eighth Annual Conference on Neural Information Processing Systems},
year={2024},
url={https://openreview.net/forum?id=L4uaAR4ArM}
}

@InProceedings{pmlr-v37-sohl-dickstein15,
  title = 	 {Deep Unsupervised Learning using Nonequilibrium Thermodynamics},
  author = 	 {Sohl-Dickstein, Jascha and Weiss, Eric and Maheswaranathan, Niru and Ganguli, Surya},
  booktitle = 	 {Proceedings of the 32nd International Conference on Machine Learning},
  pages = 	 {2256--2265},
  year = 	 {2015},
  editor = 	 {Bach, Francis and Blei, David},
  volume = 	 {37},
  series = 	 {Proceedings of Machine Learning Research},
  address = 	 {Lille, France},
  month = 	 {07--09 Jul},
  publisher =    {PMLR},
  pdf = 	 {http://proceedings.mlr.press/v37/sohl-dickstein15.pdf},
  url = 	 {https://proceedings.mlr.press/v37/sohl-dickstein15.html},
}

@inproceedings{
gu2022efficiently,
title={Efficiently Modeling Long Sequences with Structured State Spaces},
author={Albert Gu and Karan Goel and Christopher Re},
booktitle={International Conference on Learning Representations},
year={2022},
url={https://openreview.net/forum?id=uYLFoz1vlAC}
}

@misc{gneissweb,
      title={GneissWeb: Preparing High Quality Data for LLMs at Scale}, 
      author={Hajar Emami Gohari and Swanand Ravindra Kadhe and Syed Yousaf Shah. Constantin Adam and Abdulhamid Adebayo and Praneet Adusumilli and Farhan Ahmed and Nathalie Baracaldo Angel and Santosh Borse and Yuan-Chi Chang and Xuan-Hong Dang and Nirmit Desai and Ravital Eres and Ran Iwamoto and Alexei Karve and Yan Koyfman and Wei-Han Lee and Changchang Liu and Boris Lublinsky and Takuyo Ohko and Pablo Pesce and Maroun Touma and Shiqiang Wang and Shalisha Witherspoon and Herbert Woisetschlager and David Wood and Kun-Lung Wu and Issei Yoshida and Syed Zawad and Petros Zerfos and Yi Zhou and Bishwaranjan Bhattacharjee},
      year={2025},
      eprint={2502.14907},
      archivePrefix={arXiv},
      primaryClass={cs.CL},
      url={https://arxiv.org/abs/2502.14907}, 
}

@misc{owt,
    title={OpenWebText Corpus},
    author={Gokaslan, Aaron and Cohen, Vanya and Pavlick, Ellie and Tellex, Stefanie},
    howpublished={\url{http://Skylion007.github.io/OpenWebTextCorpus}},
    year={2019}
}

@article{radford2019language,
  title={Language models are unsupervised multitask learners},
  author={Radford, Alec and Wu, Jeffrey and Child, Rewon and Luan, David and Amodei, Dario and Sutskever, Ilya and others},
  journal={OpenAI blog},
  volume={1},
  number={8},
  pages={9},
  year={2019}
}

@inproceedings{
gu2024mamba,
title={Mamba: Linear-Time Sequence Modeling with Selective State Spaces},
author={Albert Gu and Tri Dao},
booktitle={First Conference on Language Modeling},
year={2024},
url={https://openreview.net/forum?id=tEYskw1VY2}
}

@article{mcculloch1943logical,
  title={A logical calculus of the ideas immanent in nervous activity},
  author={McCulloch, Warren S and Pitts, Walter},
  journal={The bulletin of mathematical biophysics},
  volume={5},
  pages={115--133},
  year={1943},
  publisher={Springer}
}

@article{rosenblatt1958perceptron,
  title={The perceptron: a probabilistic model for information storage and organization in the brain.},
  author={Rosenblatt, Frank},
  journal={Psychological review},
  volume={65},
  number={6},
  pages={386},
  year={1958},
  publisher={American Psychological Association}
}

@article{rumelhart1986learning,
  title={Learning representations by back-propagating errors},
  author={Rumelhart, David E and Hinton, Geoffrey E and Williams, Ronald J},
  journal={nature},
  volume={323},
  number={6088},
  pages={533--536},
  year={1986},
  publisher={Nature Publishing Group UK London}
}

@article{linnainmaa1976taylor,
  title={Taylor expansion of the accumulated rounding error},
  author={Linnainmaa, Seppo},
  journal={BIT Numerical Mathematics},
  volume={16},
  number={2},
  pages={146--160},
  year={1976},
  publisher={Springer}
}

@article{ivakhnenko1971polynomial,
  title={Polynomial theory of complex systems},
  author={Ivakhnenko, Alexey Grigorevich},
  journal={IEEE transactions on Systems, Man, and Cybernetics},
  number={4},
  pages={364--378},
  year={1971},
  publisher={IEEE}
}

@article{sutskever2014sequence,
  title={Sequence to sequence learning with neural networks},
  author={Sutskever, Ilya and Vinyals, Oriol and Le, Quoc V},
  journal={Advances in neural information processing systems},
  volume={27},
  year={2014}
}

@article{chung2024scaling,
  title={Scaling instruction-finetuned language models},
  author={Chung, Hyung Won and Hou, Le and Longpre, Shayne and Zoph, Barret and Tay, Yi and Fedus, William and Li, Yunxuan and Wang, Xuezhi and Dehghani, Mostafa and Brahma, Siddhartha and others},
  journal={Journal of Machine Learning Research},
  volume={25},
  number={70},
  pages={1--53},
  year={2024}
}

@article{raffel2020exploring,
  title={Exploring the limits of transfer learning with a unified text-to-text transformer},
  author={Raffel, Colin and Shazeer, Noam and Roberts, Adam and Lee, Katherine and Narang, Sharan and Matena, Michael and Zhou, Yanqi and Li, Wei and Liu, Peter J},
  journal={Journal of machine learning research},
  volume={21},
  number={140},
  pages={1--67},
  year={2020}
}

@inproceedings{devlin2019bert,
  title={Bert: Pre-training of deep bidirectional transformers for language understanding},
  author={Devlin, Jacob and Chang, Ming-Wei and Lee, Kenton and Toutanova, Kristina},
  booktitle={Proceedings of the 2019 conference of the North American chapter of the association for computational linguistics: human language technologies, volume 1 (long and short papers)},
  pages={4171--4186},
  year={2019}
}

@article{radford2018improving,
  title={Improving language understanding by generative pre-training},
  author={Radford, Alec and Narasimhan, Karthik and Salimans, Tim and Sutskever, Ilya},
  year={2018},
  publisher={San Francisco, CA, USA}
}

@inproceedings{attention2017,
 author = {Vaswani, Ashish and Shazeer, Noam and Parmar, Niki and Uszkoreit, Jakob and Jones, Llion and Gomez, Aidan N and Kaiser, \L ukasz and Polosukhin, Illia},
 booktitle = {Advances in Neural Information Processing Systems},
 title = {Attention is All you Need},
 year = {2017}
}

@inproceedings{cofrnets,
 author = {Puri, Isha and Dhurandhar, Amit and Pedapati, Tejaswini and Shanmugam, Karthikeyan and Wei, Dennis and Varshney, Kush R},
 booktitle = {Advances in Neural Information Processing Systems},
 editor = {M. Ranzato and A. Beygelzimer and Y. Dauphin and P.S. Liang and J. Wortman Vaughan},
 pages = {21668--21680},
 publisher = {Curran Associates, Inc.},
 title = {CoFrNets: Interpretable Neural Architecture Inspired by Continued Fractions},
 url = {https://proceedings.neurips.cc/paper_files/paper/2021/file/b538f279cb2ca36268b23f557a831508-Paper.pdf},
 volume = {34},
 year = {2021}
}

@incollection{cfchap,
author = {Kimball Milton},
title = {Summation Techniques, {P}ad\'{e} Approximants, and Continued Fractions},
year = {2011},
note = {\url{http://www.nhn.ou.edu/~milton/p5013/chap8.pdf}}
}

@book{cfbook,
title={Continued fractions. Analytic theory and applications},
author={Jones, William B. and Thron, W.J.},
year={1980},
series={Encyclopedia of Mathematics and its Applications},
publisher={Addison-Wesley}
}

@misc{diffusion,
      title={Simple and Effective Masked Diffusion Language Models}, 
      author={Subham Sekhar Sahoo and Marianne Arriola and Yair Schiff and Aaron Gokaslan and Edgar Marroquin and Justin T Chiu and Alexander Rush and Volodymyr Kuleshov},
      year={2024},
     archivePrefix={Advances of Neural Inf. Proc. Systems}
}
\bibliographystyle{abbrv}

\section{Brief Historical Perspective}
\label{app:rel_work}
One of the starting points of artificial neural networks was in the mathematical model of biological neurons known as \textit{artificial neurons} or McColluch-Pitts Neurons proposed in \citep{mcculloch1943logical}. These artificial neurons were remarkably similar to the elements used in modern neural networks, in that their output is a thresholded weighted sum of their inputs. The Multi Layer Perceptron (MLP) \citep{rosenblatt1958perceptron} used multiple layers of neurons with input, hidden and output layers as a simplified model of the nervous system. The Group Method of Data Handling (GMDH) \citep{ivakhnenko1971polynomial} trained a network with an MLP-type structure but each neuron in the network implements a polynomial function of a few input variable, and this was used to train a network that is 8 layers deep.

However, practical learning of networks was made easier after error backpropagation was published \citep{linnainmaa1976taylor} and demonstrated for weight update and learning representation in neural networks \citep{rumelhart1986learning}.

\section{Lemma 2 \citep{cofrnets}}

\label{lem:dCF/da}
We have 
\[
    \frac{\partial}{\partial a_k} \frac{K_{d+1}(a_0, \dots, a_d)}{K_d(a_1, \dots, a_d)} = (-1)^k \left(\frac{K_{d-k}(a_{k+1},\dots,a_d)}{K_d(a_1,\dots,a_d)}\right)^2.
\]

\begin{proof}
To compute the partial derivative of the ratio of continuants above, we first determine the partial derivative of a single continuant $K_k(a_1,\dots,a_k)$ with respect to $a_l$, $l = 1,\dots,k$. We use the representation of $K_k$ as the determinant of the following tridiagonal matrix:
\begin{equation}\label{eqn:contDet}
K_k(a_1, \dots, a_k) = \det 
\begin{bmatrix}
a_1 & 1 \\
-1 & a_2 & \ddots \\
& \ddots & \ddots & 1 \\
& & -1 & a_k
\end{bmatrix}.
\end{equation}
The partial derivatives of a determinant with respect to the matrix entries are given by the \emph{cofactor} matrix:
\[
\frac{\partial\det A}{\partial A_{ij}} = \mathrm{co}(A)_{ij},
\]
where $\mathrm{co}(A)_{ij} = (-1)^{i+j} M_{ij}$ and $M_{ij}$ is the $(i,j)$-minor of $A$. In the present case, with $A$ as the matrix in \eqref{eqn:contDet}, we require partial derivatives with respect to the diagonal entries. Hence 
\[
\frac{\partial K_k(a_1,\dots,a_k)}{\partial a_l} = M_{ll}.
\]
In deleting the $l$th row and column from $A$ to compute $M_{ll}$, we obtain a block-diagonal matrix where the two blocks are tridiagonal and correspond to $a_1,\dots,a_{l-1}$ and $a_{l+1},\dots,a_k$. Applying \eqref{eqn:contDet} to these blocks thus yields 
\begin{equation}\label{eqn:dK/da}
\frac{\partial K_k(a_1,\dots,a_k)}{\partial a_l} = K_{l-1}(a_1,\dots,a_{l-1}) K_{k-l}(a_{l+1},\dots,a_k).
\end{equation}

Returning to the ratio of continuants in the lemma, we use the quotient rule for differentiation and \eqref{eqn:dK/da} to obtain 
\begin{align}
    \frac{\partial}{\partial a_k} \frac{K_{d+1}(a_0, \dots, a_d)}{K_d(a_1, \dots, a_d)} &= \frac{1}{K_d(a_1,\dots,a_d)^2} \left( \frac{\partial K_{d+1}(a_0,\dots,a_d)}{\partial a_k} K_d(a_1,\dots,a_d) \right.\nonumber\\
    &\qquad \qquad {} \left. - K_{d+1}(a_0,\dots,a_d) \frac{\partial K_d(a_1,\dots,a_d)}{\partial a_k} \right)\nonumber\\
    &= \frac{K_{d-k}(a_{k+1},\dots,a_d)}{K_d(a_1,\dots,a_d)^2} \left( K_k(a_0,\dots,a_{k-1}) K_d(a_1,\dots,a_d) \right.\nonumber\\
    &\qquad \qquad {} \left. - K_{d+1}(a_0,\dots,a_d) K_{k-1}(a_1,\dots,a_{k-1}) \right).\label{eqn:dCF/da_1}
\end{align}
We focus on the quantity 
\begin{equation}\label{eqn:dCF/da_2}
    K_k(a_0,\dots,a_{k-1}) K_d(a_1,\dots,a_d) - K_{k-1}(a_1,\dots,a_{k-1}) K_{d+1}(a_0,\dots,a_d)
\end{equation}
in \eqref{eqn:dCF/da_1}. For $k = 0$ (and taking $K_{-1} = 0$), this reduces to $K_d(a_1,\dots,a_d)$. Equation~\eqref{eqn:dCF/da_1} then gives 
\[
\frac{\partial}{\partial a_0} \frac{K_{d+1}(a_0, \dots, a_d)}{K_d(a_1, \dots, a_d)} = \left(\frac{K_{d}(a_1,\dots,a_d)}{K_d(a_1,\dots,a_d)}\right)^2 = 1,
\]
in agreement with the fact that $a_0$ appears only as the leading term in \eqref{eqn:CFcontRatio}. For $k = 1$, \eqref{eqn:dCF/da_2} becomes 
\[
a_0 K_d(a_1,\dots,a_d) - K_{d+1}(a_0,\dots,a_d) = -K_{d-1}(a_2,\dots,a_d)
\]
using \eqref{eqn:recurCont}, and hence 
\[
\frac{\partial}{\partial a_1} \frac{K_{d+1}(a_0, \dots, a_d)}{K_d(a_1, \dots, a_d)} = -\left(\frac{K_{d-1}(a_2,\dots,a_d)}{K_d(a_1,\dots,a_d)}\right)^2.
\]
We generalize from the cases $k = 0$ and $k = 1$ with the following lemma.

\textbf{Lemma 3.} The following identity holds:
\begin{multline*}
K_k(a_0,\dots,a_{k-1}) K_d(a_1,\dots,a_d) - K_{k-1}(a_1,\dots,a_{k-1}) K_{d+1}(a_0,\dots,a_d)\\ = (-1)^k K_{d-k}(a_{k+1},\dots,a_d).
\end{multline*}

Combining \eqref{eqn:dCF/da_1} and Lemma 3 completes the proof.
\end{proof}
\begin{proof}[Proof of Lemma 3]
We prove the lemma by induction. The base cases $k = 0$ and $k = 1$ were shown above and hold moreover for any depth $d$ and any sequence $a_0, \dots, a_d$. Assume then that the lemma is true for some $k$, any $d$, and any $a_0,\dots,a_d$. For $k+1$, we use recursion~\eqref{eqn:recurCont} to obtain
\begin{align*}
    &K_{k+1}(a_0,\dots,a_{k}) K_d(a_1,\dots,a_d) - K_{k}(a_1,\dots,a_{k}) K_{d+1}(a_0,\dots,a_d)\\
    &\quad = \bigl(a_0 K_k(a_1,\dots,a_k) + K_{k-1}(a_2,\dots,a_k)\bigr) K_d(a_1,\dots,a_d)\\
    &\quad \qquad {} - K_{k}(a_1,\dots,a_{k}) \bigl(a_0 K_d(a_1,\dots,a_d) + K_{d-1}(a_2,\dots,a_d)\bigr)\\
    &\quad = K_{k-1}(a_2,\dots,a_k) K_d(a_1,\dots,a_d) - K_{k}(a_1,\dots,a_{k}) K_{d-1}(a_2,\dots,a_d).
\end{align*}
We then recognize the last line as an instance of the identity for $k$, depth $d-1$, and sequence $a_1,\dots,a_d$. Applying the inductive assumption, 
\begin{align*}
    &K_{k+1}(a_0,\dots,a_{k}) K_d(a_1,\dots,a_d) - K_{k}(a_1,\dots,a_{k}) K_{d+1}(a_0,\dots,a_d)\\
    &\quad = -(-1)^k K_{d-1-k}(a_{k+2},\dots,a_d)\\
    &\quad = (-1)^{k+1} K_{d-(k+1)}(a_{(k+1)+1},\dots,a_d),
\end{align*}
as required.
\end{proof}

\section{Example Generations}

In Figures \ref{fig:gpt2_owt}, \ref{fig:gpt2_owt_f}, \ref{fig:gpt2_owt_a} and \ref{fig:gpt2_owt_af} we see example generations of GPT2-xl, CoFrGeNet-F, CoFrGeNet-A and CoFrGeNet respectively when pre-trained on OWT dataset. While in Figures \ref{fig:gpt2_gw}, \ref{fig:gpt2_gw_f}, \ref{fig:gpt2_gw_a} and \ref{fig:gpt2_gw_af} we see example generations of GPT2-xl, CoFrGeNet-F, CoFrGeNet-A and CoFrGeNet respectively when pre-trained on GW dataset.

\begin{figure}
\centering
\includegraphics[width=\textwidth]{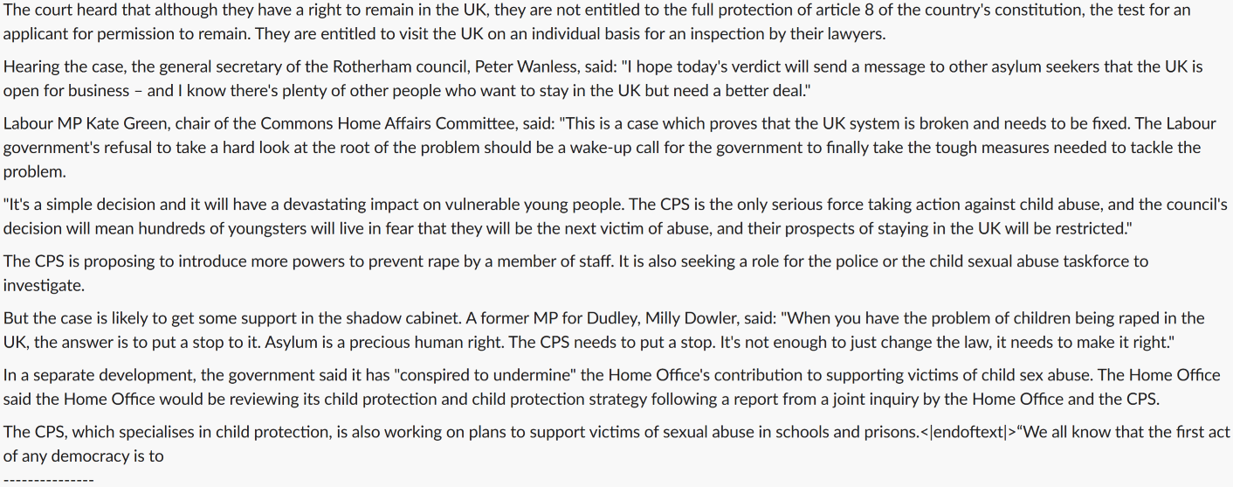}
  \caption{GPT2-xl example generation when pre-trained on OWT.}
\label{fig:gpt2_owt}
\end{figure}

\begin{figure}
\centering
\includegraphics[width=\textwidth]{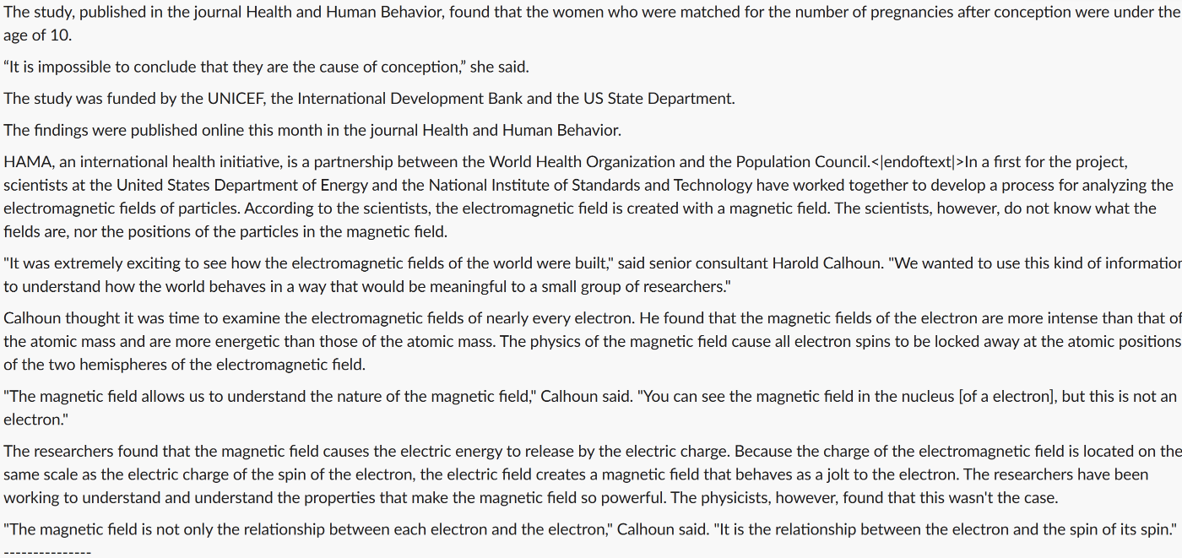}
  \caption{CoFrGeNet-F example generation when pre-trained on OWT.}
\label{fig:gpt2_owt_f}
\end{figure}

\begin{figure}
\centering
\includegraphics[width=\textwidth]{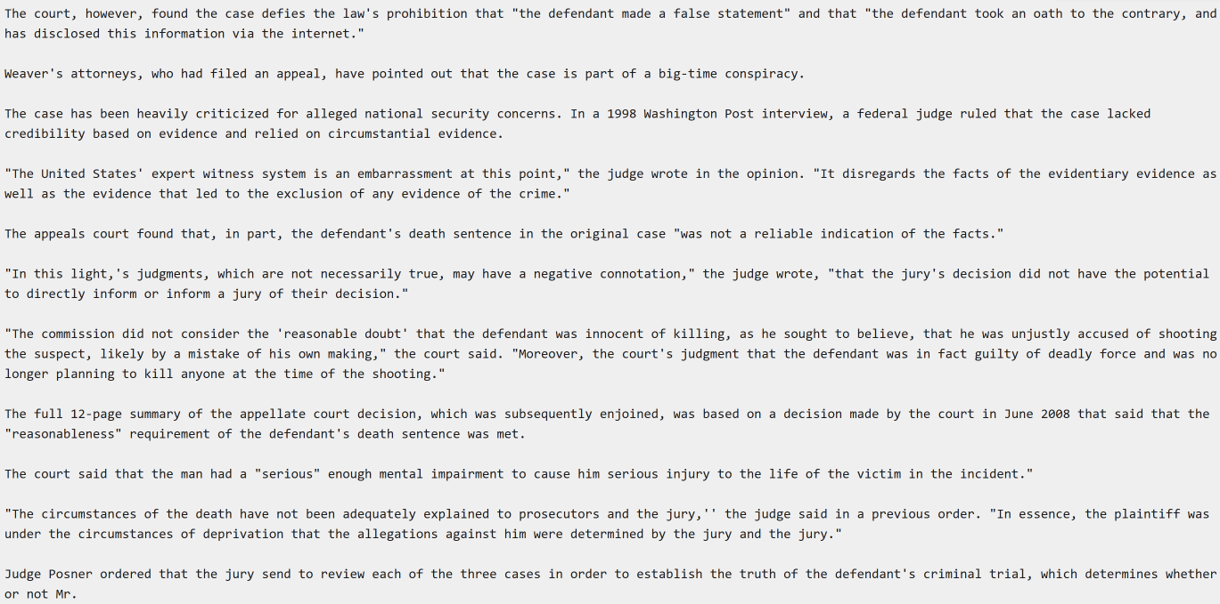}
  \caption{CoFrGeNet-A example generation when pre-trained on OWT.}
\label{fig:gpt2_owt_a}
\end{figure}

\begin{figure}[!h]
\centering
\includegraphics[width=\textwidth]{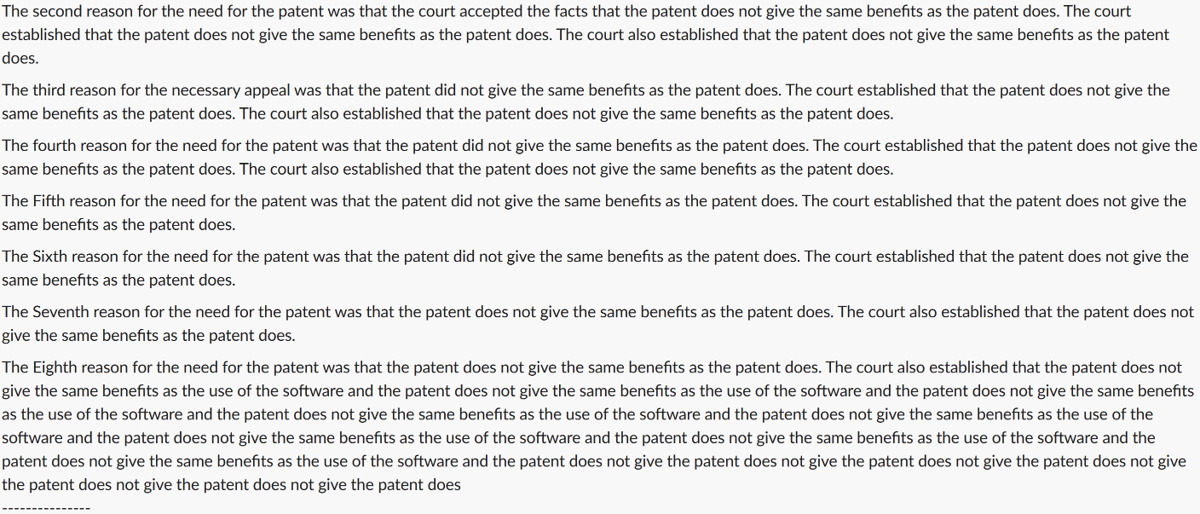}
  \caption{CoFrGeNet example generation when pre-trained on OWT.}
\label{fig:gpt2_owt_af}
\end{figure}

\begin{figure}
\centering
\includegraphics[width=\textwidth]{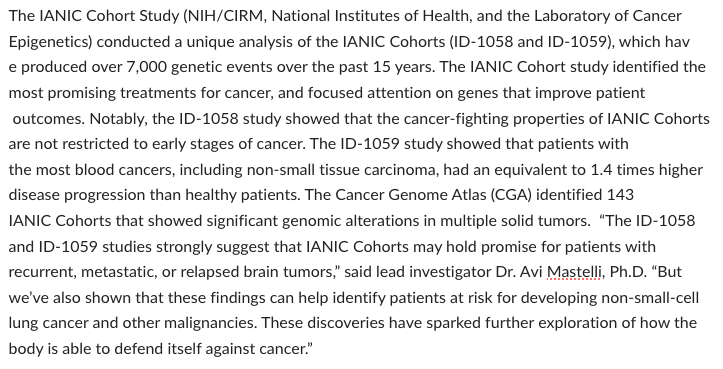}
  \caption{GPT2-xl example generation when pre-trained on GneissWeb.}
\label{fig:gpt2_gw}
\end{figure}

\begin{figure}
\centering
\includegraphics[width=\textwidth]{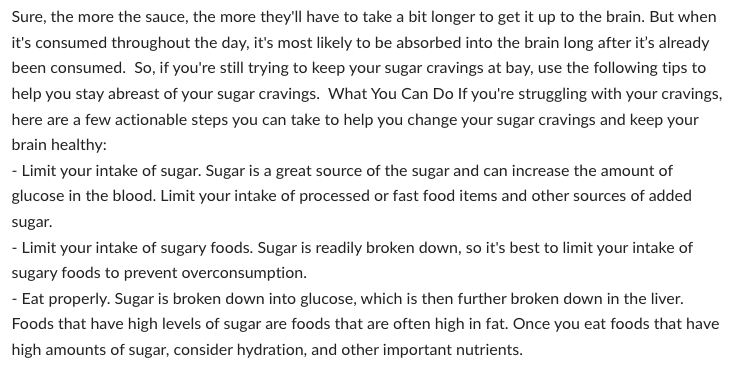}
  \caption{CoFrGeNet-F example generation when pre-trained on GneissWeb.}
\label{fig:gpt2_gw_f}
\end{figure}

\begin{figure}
\centering
\includegraphics[width=\textwidth]{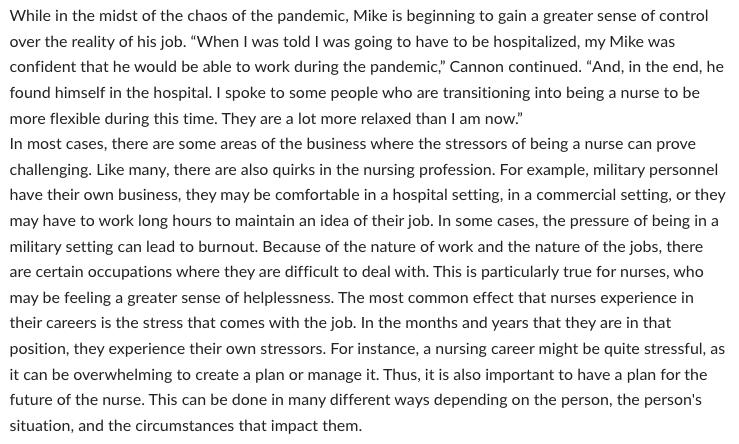}
  \caption{CoFrGeNet-A example generation when pre-trained on GneissWeb.}
\label{fig:gpt2_gw_a}
\end{figure}

\begin{figure}
\centering
\includegraphics[width=\textwidth]{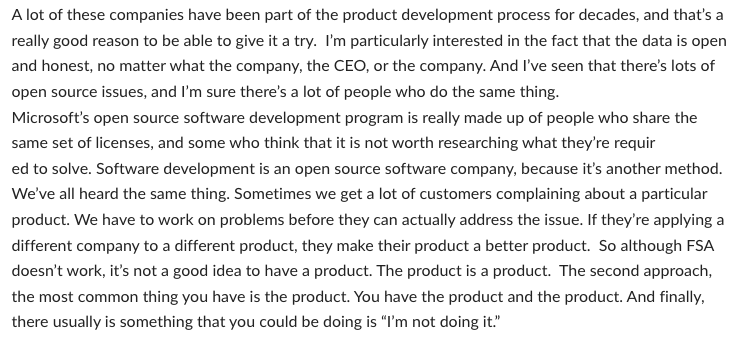}
  \caption{CoFrGeNet example generation when pre-trained on GneissWeb.}
\label{fig:gpt2_gw_af}
\end{figure}

\begin{table}[htbp]
\scriptsize
\centering
\caption{Downstream task accuracies on GLUE benchmark after finetuning the pre-trained models. The first column is the pre-training dataset. Results are mean$\pm$std with the best means bolded.}
\begin{tabular}{|p{4mm}|p{21mm}|p{11mm}|p{11mm}|p{11mm}|p{11mm}|p{11mm}|p{11mm}|p{11mm}|p{11mm}|}
  \hline
    \textbf{Data} &\textbf{Model} & \textbf{MNLI} & \textbf{QQP}& \textbf{QNLI} & \textbf{SST2} & \textbf{COLA} & \textbf{MRPC} & \textbf{RTE} & \textbf{WNLI}\\
 \hline\hline
 \multirow{6}{*}{OWT} & GPT2-xl \tiny{(1.5B)}& $86.89$\tiny{$\pm.15$}&$88.93$\tiny{$\pm.67$} &$91.35$\tiny{$\pm.34$}&$93.56$\tiny{$\pm.24$}&$81.78$\tiny{$\pm.38$}&$79.83$\tiny{$\pm.26$}&$60.27$\tiny{$\pm.22$}&$58.28$\tiny{$\pm.28$}\\
 &CoFrGeNet-F \tiny{(985M)}&$\bf 87.26$\tiny{$\pm.18$}&$\bf 89.95$\tiny{$\pm.12$} &$\bf 91.89$\tiny{$\pm.34$}&$\bf 94.16$\tiny{$\pm.29$}&$\bf 82.59$\tiny{$\pm.23$}&$\bf 80.21$\tiny{$\pm.19$}&$\bf 61.35$\tiny{$\pm.32$}&$\bf 58.30$\tiny{$\pm.16$}\\
 &CoFrGeNet-A \tiny{(1.21B)}&$86.94$\tiny{$\pm.12$}&$89.31$\tiny{$\pm.42$} &$91.74$\tiny{$\pm.31$}&$93.83$\tiny{$\pm.72$}&$81.77$\tiny{$\pm.25$}&$79.89$\tiny{$\pm.14$}&$60.91$\tiny{$\pm.92$}&$58.28$\tiny{$\pm.17$}\\
 &CoFrGeNet \tiny{(798M)}&$87.11$\tiny{$\pm.09$}&$89.36$\tiny{$\pm.23$} &$91.79$\tiny{$\pm.25$}&$93.91$\tiny{$\pm.15$}&$81.97$\tiny{$\pm.14$}&$79.93$\tiny{$\pm.17$}&$61.25$\tiny{$\pm.46$}&$58.29$\tiny{$\pm.19$}\\
&Synthesizer-D \tiny{(1.2B)}&$84.93$\tiny{$\pm.34$}&$86.82$\tiny{$\pm.34$} &$90.13$\tiny{$\pm.51$}&$91.34$\tiny{$\pm.54$}&$80.15$\tiny{$\pm.72$}&$77.95$\tiny{$\pm.25$}&$59.83$\tiny{$\pm.35$}&$58.28$\tiny{$\pm.92$}\\
&Sparse Attn \tiny{(1.21B)}&$85.27$\tiny{$\pm.63$}&$86.38$\tiny{$\pm.33$} &$90.93$\tiny{$\pm.18$}&$92.72$\tiny{$\pm.21$}&$80.76$\tiny{$\pm.28$}&$77.42$\tiny{$\pm.41$}&$59.36$\tiny{$\pm.29$}&$58.27$\tiny{$\pm.25$}\\
 \hline
 \multirow{6}{*}{GW} & GPT2-xl \tiny{(1.5B)}& $78.28$\tiny{$\pm.82$}&$86.83$\tiny{$\pm.17$}&$\bf 82.93$\tiny{$\pm.37$}&$91.82$\tiny{$\pm.22$}&$74.18$\tiny{$\pm.82$}&$77.72$\tiny{$\pm.93$}&$60.19$\tiny{$\pm.01$}&$\bf 58.33$\tiny{$\pm.07$}\\
 &CoFrGeNet-F \tiny{(985M)}&$\bf 79.62$\tiny{$\pm.63$}&$\bf 87.26$\tiny{$\pm.25$}&$82.73$\tiny{$\pm.53$}&$\bf 92.36$\tiny{$\pm.45$}&$\bf 74.83$\tiny{$\pm.56$}&$\bf 78.01$\tiny{$\pm.34$}&$\bf 61.35$\tiny{$\pm.08$}&$\bf 58.33$\tiny{$\pm.04$}\\
 &CoFrGeNet-A \tiny{(1.21B)}&$78.42$\tiny{$\pm.34$}&$86.17$\tiny{$\pm.46$}&$82.51$\tiny{$\pm.36$}&$91.86$\tiny{$\pm.36$}&$74.15$\tiny{$\pm.43$}&$77.37$\tiny{$\pm.83$}&$60.85$\tiny{$\pm.06$}&$\bf 58.33$\tiny{$\pm.06$}\\
 &CoFrGeNet \tiny{(798M)}&$79.05$\tiny{$\pm.37$}&$86.98$\tiny{$\pm.22$}&$82.12$\tiny{$\pm.28$}&$92.13$\tiny{$\pm.73$}&$74.38$\tiny{$\pm.74$}&$77.95$\tiny{$\pm.73$}&$61.11$\tiny{$\pm.04$}&$\bf 58.33$\tiny{$\pm.02$}\\
 &Synthesizer-D \tiny{(1.2B)}&$77.56$\tiny{$\pm.12$}&$86.35$\tiny{$\pm.61$} &$80.38$\tiny{$\pm.83$}&$91.25$\tiny{$\pm.71$}&$73.27$\tiny{$\pm.73$}&$76.73$\tiny{$\pm.27$}&$59.26$\tiny{$\pm.22$}&$58.24$\tiny{$\pm.97$}\\
&Sparse Attn \tiny{(1.21B)}&$77.67$\tiny{$\pm.38$}&$86.41$\tiny{$\pm.82$} &$80.77$\tiny{$\pm.16$}&$91.16$\tiny{$\pm.16$}&$72.83$\tiny{$\pm.26$}&$76.62$\tiny{$\pm.81$}&$59.39$\tiny{$\pm.38$}&$58.28$\tiny{$\pm.28$}\\
 \hline
  \end{tabular}
 \label{tab:glue_sd}
\end{table}

\end{document}